\documentclass{article} % For LaTeX2e
\usepackage{misc/iclr2027_conference,times}
\usepackage{amsthm, amsmath, amssymb, bbm, mathabx}
\usepackage{enumitem}
\allowdisplaybreaks % align split page
\usepackage{graphicx}
\usepackage{enumerate}
\usepackage{pifont}
\usepackage{booktabs}
\usepackage{tabularx}
\usepackage{multirow}
\usepackage{multicol}
\usepackage{stfloats}
\usepackage{xspace}
\usepackage[colorlinks=true, allcolors=blue]{hyperref}
\usepackage{hyperref}
\usepackage{cleveref}
\usepackage{anyfontsize} % customize font sizes
\usepackage{makecell}
\usepackage{hhline}
\usepackage{colortbl}
\usepackage{xpatch}
\usepackage{scalerel,stackengine}
\usepackage{sidecap}
\usepackage[normalem]{ulem}
\usepackage{tcolorbox}
\usepackage{tikz}
\usetikzlibrary{positioning}
\makeatletter
\def\@fnsymbol#1{\ensuremath{\ifcase#1\or *\or \dagger\or \ddagger\or
  \mathsection\or \mathparagraph\or \|\or \diamond \or **\or \dagger\dagger
  \or \ddagger\ddagger \else\@ctrerr\fi}}
\makeatother

\makeatletter
\newcommand{\printfnsymbol}[1]{%
  \textsuperscript{\@fnsymbol{#1}}%
}
\makeatother

\newtheorem{theorem}{Theorem}
\newtheorem{lemma}{Lemma}
\newtheorem{corollary}[theorem]{Corollary}

\newtheorem{proposition}[theorem]{Proposition}

\crefname{condition}{Condition}{Conditions}

\crefname{assumption}{Assumption}{Assumptions}

\theoremstyle{definition}
\newtheorem{remark}{Remark}

\DeclareMathOperator*{\argmax}{arg\,max}

\newcommand{\Roma}[1]{\uppercase\expandafter{\romannumeral#1}}

\newcommand{\TopK}{\operatorname{TopK}}
\newcommand{\adam}{{\rm adam}}

\newcommand{\lm}[1]{\textbf{\textsc{#1}}}
\usepackage{amsmath,amsfonts,bm}

\def\eqref#1{equation~\ref{#1}}
\def\1{\bm{1}}

\def\eps{{\epsilon}}

\DeclareMathAlphabet{\mathsfit}{\encodingdefault}{\sfdefault}{m}{sl}
\SetMathAlphabet{\mathsfit}{bold}{\encodingdefault}{\sfdefault}{bx}{n}

\usepackage{hyperref}
\usepackage{url}
\usepackage{graphicx}
\usepackage{float}

\title{Ranking and Partitioning of Attention Heads for Sparse and Continual Reinforcement Learning Fine-tuning}

\author{\textbf{Ruizhe Shi}$^{1}$\quad \textbf{Zaiwen Yang}$^{2}$\quad \textbf{Qiubo Wang}$^{3}$\quad \textbf{Banghua Zhu}$^{1}$\quad \textbf{Maryam Fazel}$^{1,4}$\vspace{3pt}\\
$^1$University of Washington\quad
$^2$Tsinghua University\quad 
$^3$University of Hong Kong\quad $^4$Amazon, Inc.\\
}

\iclrfinalcopy % Uncomment for camera-ready version, but NOT for submission.
\begin{document}

\maketitle

\begin{abstract}
We study sparse and continual reinforcement learning fine-tuning of language models through attention-head ranking and partitioning. For sparse learning, we rank attention heads by gradient norm to identify a small subset of heads that can be trained effectively. For continual learning, we partition heads across tasks to reduce cross-task interference and preserve previously learned capabilities. Theoretically, we relate gradient norm to training effectiveness, and further characterize why sparse, separated head updates can remain stable. Empirically, across math, instruction-following, and coding tasks, tuning higher-ranked heads consistently outperforms tuning lower-ranked ones while often matching denser baselines, and head partitioning improves retention while preserving adaptation to new tasks.
\end{abstract}
\section{Introduction}
Reinforcement-learning (RL) fine-tuning has become an important approach for improving the capabilities of language models~(LMs). However, RL fine-tuning is expensive, especially when a large fraction of model parameters is updated, which increases gradient memory and synchronization costs in distributed training~\citep{Miahi2026UnderstandingAE,fireworks2026frontier_rl}. Recent works suggest that effective adaptation can often be achieved by updating only restricted parts of the LM, such as attention modules or individual transformer layers~\citep{jin2025finetuning,Zhang2026IsOL}. This motivates sparse RL fine-tuning under a limited trainable-parameter budget.

In this work, we focus on attention heads as the basic units for sparse RL fine-tuning. Classical pruning studies show that many attention heads can be removed with little performance degradation, while a smaller subset plays more important roles~\citep{michel2019,voita2019analyzing}. More recent studies further suggest that attention heads can specialize toward different latent signals during training~\citep{sagitova2026}, and that high-level LM capabilities can be localized to sparse, task-specific subsets of heads~\citep{bair2026compressed}. This motivates us to design a criterion to identify those heads that are most effective to update during training.

Sparse RL fine-tuning is also appealing for continual learning, where sequential updates can interfere with previously learned capabilities and lead to catastrophic forgetting. A common approach is to constrain different tasks to parameter or gradient subspaces~\citep{chaudhry2020continual,doan2021theoretical,wang2023orthogonal,liang2024inflora,qiu2026splitlora}. Sparse updates can therefore serve to separate task-specific adaptations. Such parameter isolation avoids direct overwriting, yet indirect functional interference may remain~\citep{Lanzillotta2023TowardsGF}, requiring careful selection of the trainable parameters such as \citet{iurada2025efficient,Lin2025ContinualLV,lin2026parameter}.

\textbf{Contributions.} We study attention-head ranking and partitioning for sparse and continual RL fine-tuning. Our main contributions are summarized as follows:
\begin{itemize}[leftmargin=*, itemsep=1pt, topsep=2pt]
\item We propose gradient norm as a simple criterion for ranking attention heads. Across several tasks, tuning higher-ranked heads consistently outperforms tuning lower-ranked heads, while often matching or exceeding denser baselines.
\item We propose partitioning attention heads across sequential tasks and study several head-allocation strategies. Our experiments show that head partitioning with these strategies effectively prevents forgetting while preserving adaptation to new tasks.
\item We provide a theoretical analysis to guide both designs. We characterize the role of gradient norm in the optimization dynamics of sparse and continual learning, and further analyze the second-order structure of multi-head self-attention, revealing favorable properties of attention-head blocks for parameter and functional isolation.
\end{itemize}
\section{Related Works}
\textbf{Sparse optimization in deep learning and LMs.} Sparse optimization has been studied through subset selection~\citep{Dan2019OptimalAO}, sparse reward-model structure~\citep{yao2025leveraging,shi2026understanding}, and sparse task-specific parameter updates~\citep{guo-etal-2021-parameter}. For LMs, recent works show that updating only attention modules~\citep{jin2025finetuning} or a single layer~\citep{Zhang2026IsOL} can obtain competitive performance, while sparse updates can improve communication efficiency in distributed RL~\citep{Miahi2026UnderstandingAE,fireworks2026frontier_rl}. 

\textbf{Subspace continual learning.} A common approach to continual learning is to reduce interference by constraining new-task updates to subspaces that avoid directions important for previous tasks. Early methods construct such subspaces from past gradients or representations~\citep{farajtabar2020orthogonal,saha2021gradient,chaudhry2020continual,doan2021theoretical}. Recent works extend this idea to LoRA adaptation~\citep{wang2023orthogonal,liang2024inflora,zhang2025lori,qiu2026splitlora}, full fine-tuning~\citep{nayak2026sculpting}, and model merging~\citep{qiao2026merge,tang2025merging,yang2025continual}.
\citet{iurada2025efficient} rank parameters by sensitivity measured using Fisher matrix, and fine-tune only the least-sensitive ones. \citet{Lin2025ContinualLV} update only parameters that are activated more strongly by new data than by pretraining data, reducing interference with prior knowledge. \citet{lin2026parameter} dynamically update parameter-level masks based on accumulated gradient magnitudes, whereas we use static head-level gradient norms.

\textbf{Theoretical understanding of continual learning.} Neural tangent kernel~(NTK)-based analyses relate catastrophic forgetting to overlap between task-induced tangent subspaces~\citep{doan2021theoretical,zheng2026understanding}, while kernel-based views have also been developed for LM fine-tuning~\citep{Malladi2023}. Parameter isolation has been studied through Hessian-based arguments~\citep{Lanzillotta2023TowardsGF}. In RL, \citet{luo2026sphere} characterize loss of plasticity through degradation of spectral plasticity, while \citet{shenfeld2026rls} show that online RL tends to forget less by staying closer to the base policy, and \citet{chen2026retaining} attribute this advantage to the use of on-policy data. We connect the stability--plasticity trade-off in continual learning directly to gradient norms.

\textbf{Mechanism of attention heads.} Prior work has shown that attention heads are highly heterogeneous, with many heads being redundant while a smaller subset exhibits specialized behavior~\citep{michel2019,voita2019analyzing}. More directly related to this work, \citet{bair2026compressed} show that small subsets of attention heads can be critical to performance on particular LM capabilities. \citet{sagitova2026} theoretically characterize how attention heads specialize toward distinct latent signals during training. \citet{ormaniec2024} further analyzes the second-order structure of self-attention. Together, these results motivate us to focus on attention heads as structured units.
\section{Preliminaries}
\label{sec:preliminaries}
\textbf{Notations.}
Let $[H]:=\{1,\ldots,H\}$ index the attention heads. We partition the trainable attention parameters into disjoint head blocks $\theta=(\theta_1,\ldots,\theta_H)$, where $\theta_h$ denotes the parameters associated with head $h$. For $S\subseteq[H]$, let $\theta_S:=(\theta_h)_{h\in S}$ and $S^c:=[H]\setminus S$. More generally, we write $x_h$ and $x_S$ for the corresponding vector blocks, and $D_{hh'}$ and $D_{ST}$ for the corresponding matrix blocks, for $h,h'\in [H],S,T\subseteq[H]$.
For an update $\Delta=(\Delta_1,\ldots,\Delta_H)$, define its head support by $\operatorname{supp}_{\mathrm{head}}(\Delta):=\{h\in[H]:\|\Delta_h\|_2>0\}$. Let $\{\xi_i\}_{i=1}^N$ denote the training data and $\mathcal L(\theta):=\frac{1}{N}\sum_{i=1}^N\mathcal L(\theta;\xi_i)$ the empirical training loss. In the RL fine-tuning setting, each $\xi_i$ is a prompt, and $\mathcal L(\theta;\xi_i)$ denotes the corresponding empirical RL loss induced by rollouts from that prompt. We use $\|\cdot\|_2$ for the Euclidean norm and $\|A\|_{\mathrm{op}}$ for the operator norm of a matrix $A$.

\textbf{Gradient norm.}
Given a model with parameters $\theta$, we define the gradient norm of a head set $S$ as
\begin{align*}
\mathcal G(S;\theta):=\|\nabla_{\theta_S}\mathcal L(\theta)\|_2
=\sqrt{\sum_{h\in S}\|\nabla_{\theta_h}\mathcal L(\theta)\|_2^2}\;.
\end{align*}
For two tasks $A$ and $B$, we write $\mathcal G_A(S;\theta):=\|\nabla_{\theta_S}\mathcal L_A(\theta)\|_2$ and $
\mathcal G_B(S;\theta):=\|\nabla_{\theta_S}\mathcal L_B(\theta)\|_2$.
We assume that these losses are twice continuously differentiable on the relevant training regions.

\textbf{Second-order terms of loss Hessian.}
We study the directional second-order derivative:
\begin{align*}
    \frac{\left|\nabla_\theta\mathcal L(\theta)^\top \nabla_\theta^2\mathcal L_(\theta)\nabla_\theta\mathcal L(\theta)\right|}{\|\nabla_\theta\mathcal L(\theta)\|_2^2}\;,
\end{align*}
which measures how rapidly the gradient norm changes when moving along its own normalized gradient direction. For two objectives $\mathcal L_A$ and $\mathcal L_B$, we also study the mixed directional second-order derivative:
\begin{align*}
    \frac{\left|\nabla_\theta\mathcal L_A(\theta)^\top \nabla_\theta^2\mathcal L_A(\theta)\nabla_\theta\mathcal L_B(\theta)\right|}{\|\nabla_\theta\mathcal L_A(\theta)\|_2\|\nabla_\theta\mathcal L_B(\theta)\|_2}\;,
\end{align*}
which measures how rapidly the task-$A$ gradient norm changes when moving along the normalized task-$B$ gradient direction.

\textbf{Multi-head self-attention.}
Consider a multi-head self-attention layer with input $X\in\mathbb R^{L\times d}$ and $H$ heads. For head $h\in[H]$, let $W_h^Q,W_h^K\in\mathbb R^{d\times d_k}$ and $W_h^V\in\mathbb R^{d\times d}$, and define
\begin{align*}
Q_h=XW_h^Q,\quad K_h=XW_h^K,\quad V_h=XW_h^V,\quad A_h=\operatorname{softmax}\left(\frac{Q_hK_h^\top}{\sqrt{d_k}}\right)\;.
\end{align*}
The layer output is $F(\theta)=\sum_{h=1}^H A_hV_h\in\mathbb R^{L\times d}$.
\section{Ranking the Attention Heads}\label{sec:ranking}
\subsection{Theoretical understandings}\label{sec:ranking-theory}
We now study gradient-norm ranking for attention heads. Given the base model $\theta_0$, let $\mathcal G(S):=\mathcal G(S;\theta_0)$ denote the gradient norm defined in \Cref{sec:preliminaries}, and let $S_k$ be the set of the $k$ heads with largest individual gradient norms at $\theta_0$. Omitted proofs are deferred to Appendix~\ref{sec:omitted-ranking-theory}. We first characterize what this quantity measures under gradient flow. 
\begin{proposition}
\label{prop:gradnorm_flow_eq}
Let $\theta(t;S)$ follow Euclidean gradient flow restricted to $S$, initialized at $\theta_0$, and suppose the flow exists on $[0,T]$. Then $-\frac{d}{dt}\mathcal L(\theta(t;S))=\mathcal G(S;\theta(t))^2$.
\end{proposition}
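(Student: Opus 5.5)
The proof is a direct chain-rule computation. First I make precise what "Euclidean gradient flow restricted to $S$" means. The trajectory $\theta(t)=\theta(t;S)$ solves
\begin{align*}
\dot\theta_S(t)=-\nabla_{\theta_S}\mathcal L(\theta(t)),\qquad \dot\theta_{S^c}(t)=0,\qquad \theta(0)=\theta_0 .
\end{align*}
Equivalently, $\dot\theta(t)=-P_S\nabla_\theta\mathcal L(\theta(t))$, where $P_S$ is the orthogonal projection onto the coordinates of the head blocks in $S$. This projection zeros the blocks $h\in S^c$ and leaves the blocks $h\in S$ unchanged.

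Next I differentiate along the flow. The loss $\mathcal L$ is assumed $C^2$, hence $C^1$, on the relevant region, and the flow exists on $[0,T]$, so $\theta(\cdot)$ is $C^1$ there. The chain rule then gives
\begin{align*}
\frac{d}{dt}\mathcal L(\theta(t))=\nabla_\theta\mathcal L(\theta(t))^\top\dot\theta(t)=\sum_{h\in S}\nabla_{\theta_h}\mathcal L(\theta(t))^\top\dot\theta_h(t)+\sum_{h\in S^c}\nabla_{\theta_h}\mathcal L(\theta(t))^\top\dot\theta_h(t).
\end{align*}
The second sum vanishes because $\dot\theta_h=0$ for $h\in S^c$. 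In the first sum I substitute $\dot\theta_h=-\nabla_{\theta_h}\mathcal L(\theta(t))$, which gives $-\sum_{h\in S}\|\nabla_{\theta_h}\mathcal L(\theta(t))\|_2^2$.

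By the block decomposition in the definition of $\mathcal G$, this sum equals $\|\nabla_{\theta_S}\mathcal L(\theta(t))\|_2^2=\mathcal G(S;\theta(t))^2$. This yields the claim. Equivalently, one can write it as $-\nabla\mathcal L^\top P_S\nabla\mathcal L=-\|P_S\nabla\mathcal L\|_2^2$, using $P_S^2=P_S=P_S^\top$.

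There is no real obstacle here. The only points needing care are that the restricted flow is interpreted as freezing $\theta_{S^c}$ at its initial value, and that differentiability of $\mathcal L$ along the trajectory holds. The latter follows from the standing $C^2$ assumption on the training region together with the assumed existence of the flow on $[0,T]$.
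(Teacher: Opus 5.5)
Your proposal is correct and matches the paper's proof. Both write the restricted flow as $\dot\theta_S=-\nabla_{\theta_S}\mathcal L$, $\dot\theta_{S^c}=0$, and apply the chain rule to get $\frac{d}{dt}\mathcal L(\theta(t;S))=-\|\nabla_{\theta_S}\mathcal L(\theta(t;S))\|_2^2=-\mathcal G(S;\theta(t))^2$; your projection reformulation and regularity remarks are just harmless additions.
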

\begin{proof}
The gradient flow satisfies $\dot\theta_S(t;S)=-\nabla_{\theta_S}\mathcal L(\theta(t;S))$ and $\dot\theta_{S^c}(t;S)=0$, so the chain rule gives $\frac{d}{dt}\mathcal L(\theta(t;S))=-\|\nabla_{\theta_S}\mathcal L(\theta(t;S))\|_2^2=-\mathcal G(S;\theta(t))^2$.
\end{proof}

\begin{corollary}
\label{cor:gradnorm-initial}
We have $S_k\in\argmax_{|S|\le k}\left\{-\left.\frac{d}{dt}\mathcal L(\theta(t;S))\right|_{t=0}\right\}$, which maximizes the initial loss decrease rate.
\end{corollary}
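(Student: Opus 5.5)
The plan is to evaluate \Cref{prop:gradnorm_flow_eq} at $t=0$ and then solve a simple combinatorial maximization. Since every flow $\theta(t;S)$ starts at $\theta(0;S)=\theta_0$, the proposition gives, for every $S\subseteq[H]$,
\begin{align*}
-\left.\frac{d}{dt}\mathcal L(\theta(t;S))\right|_{t=0}=\mathcal G(S;\theta_0)^2=\sum_{h\in S}g_h,\qquad g_h:=\|\nabla_{\theta_h}\mathcal L(\theta_0)\|_2^2\ge 0\;.
\end{align*}
We need the flow to exist on some $[0,T]$ with $T>0$ so that the derivative at $t=0$ makes sense. This follows from local existence for the ODE $\dot\theta_S=-\nabla_{\theta_S}\mathcal L$, because $\mathcal L$ is $C^2$ and so its gradient is locally Lipschitz.

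The problem therefore reduces to maximizing the additive set function $f(S)=\sum_{h\in S}g_h$ with nonnegative weights over $|S|\le k$. First, since $g_h\ge 0$, $f$ is monotone under inclusion. So for any $S$ with $|S|<k$ we can add arbitrary heads without decreasing $f$, and it suffices to compare sets with $|S|=k$ (assuming $k\le H$; otherwise $S_k=[H]$ is trivially optimal). Second, for $|S|=k$, a standard exchange argument applies. Order the heads so that $g_{(1)}\ge\cdots\ge g_{(H)}$, so that $S_k$ consists of the top $k$ heads. Any $S$ of size $k$ can be written with its weights sorted, and its $j$-th largest weight is at most $g_{(j)}$. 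Summing over $j$ gives $f(S)\le f(S_k)$.

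Ranking heads by $g_h$ is the same as ranking by $\|\nabla_{\theta_h}\mathcal L(\theta_0)\|_2$, because squaring preserves order on nonnegative numbers. Ties in the ranking only affect which maximizer $S_k$ is, which is why the statement uses $\in\argmax$.

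No step is a real obstacle. The only delicate points are that the gradient-flow derivative at $t=0$ is well defined, which local existence handles, and the bookkeeping for the constraint $|S|\le k$ rather than $|S|=k$, which monotonicity handles.
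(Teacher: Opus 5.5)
Your proof is correct and follows the paper's route: evaluate \Cref{prop:gradnorm_flow_eq} at $t=0$ to get $\mathcal G(S)^2=\sum_{h\in S}\mathcal G(\{h\})^2$, then note that the $k$ heads with the largest gradient norms maximize this sum. Your monotonicity and exchange argument, and your remark on local existence of the flow, spell out details the paper leaves implicit.
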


\begin{corollary}
\label{prop:gradnorm-ranking}
For any head set $S$ and any $t>0$ for which the restricted flow exists, define the time-averaged gradient norm by $\overline{\mathcal G}(S;t):=\sqrt{\frac{1}{t}\int_0^t\mathcal G(S;\theta(s;S))^2\,ds}$.
Then we have
\begin{align*}
    \mathcal L(\theta_0)-\mathcal L(\theta(t;S_k))\ge\frac{\overline{\mathcal G}(S_k;t)^2}{\max_{|S|\le k}\overline{\mathcal G}(S;t)^2}\max_{|S|\le k}\left[\mathcal L(\theta_0)-\mathcal L(\theta(t;S))\right]\;.
\end{align*}
\end{corollary}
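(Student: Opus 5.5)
Integrating Proposition~\ref{prop:gradnorm_flow_eq} over $[0,t]$ turns the loss decrease of every restricted flow into a time-averaged squared gradient norm, and the inequality is then a comparison of ratios. Concretely, for any head set $S$ whose restricted flow exists on $[0,t]$, the fundamental theorem of calculus gives
\begin{align*}
\mathcal L(\theta_0)-\mathcal L(\theta(t;S))=\int_0^t \mathcal G(S;\theta(s;S))^2\,ds = t\,\overline{\mathcal G}(S;t)^2\;,
\end{align*}
by the definition of $\overline{\mathcal G}(S;t)$. Here I use that $\mathcal L$ is $C^2$ on the relevant region and that $s\mapsto\theta(s;S)$ is $C^1$, so $s\mapsto \mathcal L(\theta(s;S))$ is continuously differentiable. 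In particular every loss decrease is nonnegative.

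Applying this identity to every $S$ with $|S|\le k$ and taking the maximum gives $\max_{|S|\le k}[\mathcal L(\theta_0)-\mathcal L(\theta(t;S))] = t\max_{|S|\le k}\overline{\mathcal G}(S;t)^2$. Applying it to $S_k$ gives $\mathcal L(\theta_0)-\mathcal L(\theta(t;S_k))=t\,\overline{\mathcal G}(S_k;t)^2$. Substituting both into the right-hand side of the claim, the factor $t\max_{|S|\le k}\overline{\mathcal G}(S;t)^2$ cancels against the denominator, leaving $t\,\overline{\mathcal G}(S_k;t)^2$. So the claim holds with equality whenever the denominator is positive.

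Two side points need care; neither is a real obstacle.
\begin{itemize}
\item \textbf{Degenerate denominator.} If $\max_{|S|\le k}\overline{\mathcal G}(S;t)=0$, then every restricted flow with $|S|\le k$ leaves the loss unchanged up to time $t$. This includes $S_k$, so both sides vanish once the ratio is read with the convention $0/0\cdot 0=0$. I would state this convention explicitly.
\item \textbf{Existence of the flows.} The maximum ranges only over the finitely many $S$ with $|S|\le k$ whose restricted flows exist on $[0,t]$, in line with the hypothesis ``for which the restricted flow exists.'' Since this family is finite, the maximum is attained.
\end{itemize}

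Notably, the argument never uses that $S_k$ has the largest initial gradient norms; that property only enters through Corollary~\ref{cor:gradnorm-initial}. The corollary is an exact identity dressed as a bound, and its content lies in interpreting the ratio $\overline{\mathcal G}(S_k;t)^2/\max_{|S|\le k}\overline{\mathcal G}(S;t)^2$.
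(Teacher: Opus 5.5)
Your proposal is correct and follows the paper's proof. Both integrate Proposition~\ref{prop:gradnorm_flow_eq} to obtain $\mathcal L(\theta_0)-\mathcal L(\theta(t;S))=t\,\overline{\mathcal G}(S;t)^2$ and then compare; the paper writes the bound for each $S$ before maximizing, while you substitute the maximized identity directly. Your remarks are also accurate: the bound holds with equality, the top-$k$ property of $S_k$ is never used, and the zero-denominator case needs a convention, which the paper avoids by restricting to $S$ with $\mathcal G(S)>0$.
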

\begin{remark}
This simple gradient-flow analysis shows that gradient norm characterizes training efficiency under a fixed head budget. It determines both the initial loss-decrease rate and finite-time progress through the time-averaged gradient norm.
\end{remark}
\begin{proposition}
\label{prop:gradnorm_flow}
Suppose that, for some $\beta>0$,
\begin{align*}
    \frac{\left|(\nabla_{\theta_S}\mathcal L(\theta(t;S)))^\top\nabla_{\theta_S}^2\mathcal L(\theta(t;S))\nabla_{\theta_S}\mathcal L(\theta(t;S))\right|}{\Vert\nabla_{\theta_S}\mathcal L(\theta(t;S))\Vert_2^2}\le\beta
\end{align*}
for all $t\in[0,T]$ with $\mathcal G(S;\theta(t;S))>0$. Then, for every $t\in(0,T]$,
\begin{align*}
    \sqrt{\frac{1-e^{-2\beta t}}{2\beta t}}\,\mathcal G(S)\le\overline{\mathcal G}(S;t)\le\sqrt{\frac{e^{2\beta t}-1}{2\beta t}}\,\mathcal G(S)\;.
\end{align*}
\end{proposition}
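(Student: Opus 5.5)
We will control how fast the squared gradient norm changes along the restricted flow, sandwich it between two exponentials, and then average over time. Write $g(t):=\nabla_{\theta_S}\mathcal L(\theta(t;S))$ and $\phi(t):=\mathcal G(S;\theta(t;S))^2=\|g(t)\|_2^2$, so that $\phi(0)=\mathcal G(S)^2$ and $\overline{\mathcal G}(S;t)^2=\frac1t\int_0^t\phi(s)\,ds$.

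The first step is to differentiate $\phi$. Along the restricted flow only $\theta_S$ moves, with $\dot\theta_S=-g$ and $\dot\theta_{S^c}=0$, as in the proof of \Cref{prop:gradnorm_flow_eq}. Hence $\dot g(t)=-\nabla^2_{\theta_S}\mathcal L(\theta(t;S))\,g(t)$; only the $SS$ Hessian block enters, because $\dot\theta_{S^c}=0$. This gives
\begin{align*}
\dot\phi(t)=2g(t)^\top\dot g(t)=-2\,g(t)^\top\nabla^2_{\theta_S}\mathcal L(\theta(t;S))\,g(t)\;.
\end{align*}
Whenever $\phi(t)>0$, the hypothesis then yields $|\dot\phi(t)|\le2\beta\phi(t)$. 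Twice continuous differentiability makes $\phi$ a $C^1$ function.

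The second step, which is the main technical point, is to obtain the Grönwall-type two-sided bound
\begin{align*}
\phi(0)e^{-2\beta t}\le\phi(t)\le\phi(0)e^{2\beta t}\;.
\end{align*}
\begin{itemize}
\item If $\phi(0)=0$, then $g(0)=0$. The point $\theta_0$ is an equilibrium, so uniqueness of solutions of the $C^1$ vector field $-\nabla_{\theta_S}\mathcal L$ gives $\phi\equiv0$, and both bounds hold trivially.
\item If $\phi(0)>0$, let $\tau$ be the first zero of $\phi$ in $[0,T]$ (if any). On $[0,\tau)$ the inequality $|\frac{d}{dt}\log\phi|\le2\beta$ holds, so $\phi(t)\ge\phi(0)e^{-2\beta t}>0$ there. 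By continuity $\phi(\tau)>0$, which is a contradiction, so $\phi$ never vanishes. Integrating $\frac{d}{dt}\log\phi$ over $[0,t]$ then gives both exponential bounds on all of $[0,T]$.
\end{itemize}
This avoids any issue with the hypothesis being imposed only where $\mathcal G>0$.

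The final step integrates these bounds over $[0,t]$ and divides by $t$:
\begin{align*}
\frac{1-e^{-2\beta t}}{2\beta t}\,\mathcal G(S)^2\le\overline{\mathcal G}(S;t)^2\le\frac{e^{2\beta t}-1}{2\beta t}\,\mathcal G(S)^2\;.
\end{align*}
Taking square roots gives the claim.
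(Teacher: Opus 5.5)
Your proposal is correct and follows essentially the same route as the paper. Both differentiate $\mathcal G(S;\theta(t;S))^2$ along the restricted flow to get $|\frac{d}{dt}\mathcal G^2|\le 2\beta\,\mathcal G^2$, integrate to the two-sided exponential bound, and average over $[0,t]$. Your treatment of a vanishing gradient (the equilibrium/uniqueness case and the first-zero contradiction) adds rigor the paper's proof passes over.
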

\begin{remark}
Under the second-order condition, the time-averaged gradient norm remains controlled by its initial value, so the ranking need not be recomputed during training as in \citet{lin2026parameter}. In our experiments in \Cref{sec:ranking-exp}, this ranking remains instructive for hundreds of training steps. See \Cref{sec:attention-curvature} for discussions on why the second-order assumption on attention blocks is reasonable.
\end{remark}

\textbf{Adam's perspective.}
We now give a complementary optimizer-level interpretation based on Adam. Let $m$ and $v$ denote the moment estimates used in the Adam update at $\theta_0$, and let
\begin{align*}
    D:=\operatorname{diag}(\sqrt v+\epsilon_{\adam})\succ0,
    \qquad \eta>0\;,
\end{align*}
where $\epsilon_{\adam}>0$. The Adam step $\Delta^{\adam}=-\eta D^{-1}m$ minimizes the objective~\citep{Duchi2011AdaptiveSM}:
\begin{align*}
    \mathcal L_{\adam}(\Delta):=m^\top\Delta+\frac{1}{2\eta}\Delta^\top D\Delta\;,
\end{align*}
where $\eta$ is the learning rate.
To induce head-level sparsity, we add a group-lasso penalty and consider
\begin{align*}
    \mathcal L_{{\adam}_\lambda}(\Delta):=m^\top\Delta+\frac{1}{2\eta}\Delta^\top D\Delta+\lambda\sum_{h=1}^H\|\Delta_h\|_2\;,
    \qquad \lambda\ge0\;.
\end{align*}

\begin{proposition}
\label{prop:sparse-adam}
The unique minimizer $\Delta^\star(\lambda)$ of
$\mathcal L_{\adam_\lambda}(\Delta)$ satisfies that $\exists k$, s.t.
\begin{align*}
    \operatorname{supp}_{\rm head}(\Delta^\star(\lambda))=\underset{h}{\arg\TopK}\left(\|m_h\|_2^2\right)\;.
\end{align*}
\end{proposition}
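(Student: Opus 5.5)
The plan is to exploit the fact that $D$ is diagonal, so the objective $\mathcal L_{\adam_\lambda}$ decouples across head blocks. Writing $D_h$ for the diagonal block of $D$ corresponding to head $h$, we have
\begin{align*}
\mathcal L_{\adam_\lambda}(\Delta)=\sum_{h=1}^H \phi_h(\Delta_h),\qquad \phi_h(\Delta_h):=m_h^\top\Delta_h+\frac{1}{2\eta}\Delta_h^\top D_h\Delta_h+\lambda\|\Delta_h\|_2\;.
\end{align*}
Each $\phi_h$ is the sum of a strongly convex quadratic (since $D_h\succ0$ because $\epsilon_{\adam}>0$) and a convex norm. Hence each $\phi_h$ is strongly convex and has a unique minimizer $\Delta_h^\star$. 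The tuple of these minimizers is the unique global minimizer $\Delta^\star(\lambda)$. This settles existence and uniqueness, and it reduces the support question to a per-head question.

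Next I characterize exactly when $\Delta_h^\star=0$, using the subgradient optimality condition. The point $0$ minimizes $\phi_h$ iff $0\in\partial\phi_h(0)=m_h+\frac{1}{\eta}D_h\cdot 0+\lambda\,\mathcal B$, where $\mathcal B=\{u:\|u\|_2\le1\}$ is the subdifferential of $\|\cdot\|_2$ at the origin. This holds iff $\|m_h\|_2\le\lambda$. The key observation is that the quadratic term contributes nothing to the subdifferential at $0$. Therefore the preconditioner $D_h$, however heterogeneous across heads, does not affect which heads are zeroed out. For completeness I would also note the converse: if $\|m_h\|_2>\lambda$, then $\Delta_h^\star\neq0$ by the same condition. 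In that case $\Delta_h^\star$ solves $m_h+\frac1\eta D_h\Delta_h+\lambda\Delta_h/\|\Delta_h\|_2=0$, though this explicit form is not needed.

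Consequently $\operatorname{supp}_{\rm head}(\Delta^\star(\lambda))=\{h:\|m_h\|_2>\lambda\}=\{h:\|m_h\|_2^2>\lambda^2\}$. This is a strict upper level set of the scores $\|m_h\|_2^2$. Take $k:=|\{h:\|m_h\|_2>\lambda\}|$. Any head in the set has a strictly larger score than any head outside it, so the set consists of exactly the $k$ heads with the largest values of $\|m_h\|_2^2$, i.e.\ it equals $\arg\TopK_h(\|m_h\|_2^2)$.

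The only delicate point I expect is the interpretation of $\arg\TopK$ under ties. I would handle it by remarking that the level-set description never splits a tie class: heads with equal $\|m_h\|_2$ are either all in or all out. So the support is always a valid (tie-consistent) top-$k$ set, with $k$ ranging over $0,\dots,H$ as $\lambda$ decreases from $\max_h\|m_h\|_2$ to $0$.
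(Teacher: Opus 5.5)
Your proposal is correct and takes the same approach as the paper: the objective separates across heads, the subgradient condition at $\Delta_h=0$ reduces to $\|m_h\|_2\le\lambda$, and strict convexity gives uniqueness. You also spell out why the strict level set $\{h:\|m_h\|_2>\lambda\}$ is a valid, tie-consistent top-$k$ set, a step the paper leaves implicit.
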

\begin{proposition}
\label{prop:sparse-adam-optimal}
    Define $\Delta_{S}^\star$ as a minimizer of $\mathcal L_{\adam}(\Delta)$ w.r.t. $\operatorname{supp}_{\rm head}(\Delta)\subseteq S$, and $S_k^{(m)}$ be the set of the $k$ heads with largest $\|m_h\|_2$. Then we have
    \begin{align*}
        \mathcal L_{\adam}(\Delta_{S_k^{(m)}}^\star)\le \min_{|S|\le k}\frac{\min_{i\in \theta_{S}}\sqrt{v_i}+\eps_{\adam}}{\max_{i\in \theta_{S_k^{(m)}}}\sqrt{v_i}+\eps_{\adam}}\cdot \mathcal L_{\adam}(\Delta_{S}^\star)\;.
    \end{align*}
\end{proposition}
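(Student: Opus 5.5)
The plan is to compute the restricted minimizers in closed form and then compare the two optimal values with elementary weighted-sum inequalities, minding the signs, since all optimal values are nonpositive. Throughout write $d_i:=\sqrt{v_i}+\eps_{\adam}>0$ for the diagonal entries of $D$.

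\textbf{Step 1 (closed form).} For a fixed head set $S$, the constraint $\operatorname{supp}_{\rm head}(\Delta)\subseteq S$ forces $\Delta_{S^c}=0$. The objective then becomes $m_S^\top\Delta_S+\frac{1}{2\eta}\Delta_S^\top D_{SS}\Delta_S$. Because $D_{SS}$ is diagonal and positive definite, this is a separable strictly convex quadratic. Its minimizer is $\Delta^\star_S=(-\eta D_{SS}^{-1}m_S,\,0)$, with optimal value
\begin{align*}
\mathcal L_{\adam}(\Delta^\star_S)=-\frac{\eta}{2}\sum_{i\in\theta_S}\frac{m_i^2}{d_i}\;\le 0\;.
\end{align*}

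\textbf{Step 2 (upper bound for $S_k^{(m)}$).} Bounding each $d_i$ by the maximum over $S_k^{(m)}$ gives
\begin{align*}
\mathcal L_{\adam}(\Delta^\star_{S_k^{(m)}})\le -\frac{\eta}{2}\,\frac{\|m_{S_k^{(m)}}\|_2^2}{\max_{i\in\theta_{S_k^{(m)}}}d_i}\;.
\end{align*}
Next, for any $|S|\le k$ we have $\|m_{S_k^{(m)}}\|_2^2=\sum_{h\in S_k^{(m)}}\|m_h\|_2^2\ge\sum_{h\in S}\|m_h\|_2^2=\|m_S\|_2^2$, because $S_k^{(m)}$ collects the $k$ largest head norms. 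We can therefore replace $\|m_{S_k^{(m)}}\|_2^2$ by $\|m_S\|_2^2$ in the bound above; the bound only increases and remains valid.

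\textbf{Step 3 (lower bound on the right-hand side).} Multiply $\mathcal L_{\adam}(\Delta^\star_S)$ by the ratio $r:=\min_{i\in\theta_S}d_i/\max_{i\in\theta_{S_k^{(m)}}}d_i>0$. Using $\min_{i\in\theta_S}d_i\cdot\sum_{i\in\theta_S}m_i^2/d_i\le\|m_S\|_2^2$, we get
\begin{align*}
r\cdot\mathcal L_{\adam}(\Delta^\star_S)=-\frac{\eta}{2}\,\frac{\min_{i\in\theta_S}d_i\sum_{i\in\theta_S}m_i^2/d_i}{\max_{i\in\theta_{S_k^{(m)}}}d_i}\ge-\frac{\eta}{2}\,\frac{\|m_S\|_2^2}{\max_{i\in\theta_{S_k^{(m)}}}d_i}\;.
\end{align*}

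\textbf{Step 4 (conclude).} Chaining Steps 2 and 3 gives $\mathcal L_{\adam}(\Delta^\star_{S_k^{(m)}})\le r\cdot\mathcal L_{\adam}(\Delta^\star_S)$ for every $|S|\le k$. Taking the minimum over $S$ yields the claim.

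The only delicate point is sign bookkeeping. The values are nonpositive, so multiplying by a ratio $r$ that may exceed $1$ moves $r\cdot\mathcal L_{\adam}(\Delta^\star_S)$ downward. The inequalities in Steps 2 and 3 must therefore be oriented as above, with the maximum over $S_k^{(m)}$ in the denominator and the minimum over $S$ in the numerator.

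The degenerate case $m_S=0$ is immediate: the right-hand side is $0$, and the left-hand side is nonpositive.
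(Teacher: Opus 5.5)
Your proposal is correct and follows the paper's proof essentially step for step. The paper likewise computes the restricted minimizer $-\eta D_{SS}^{-1}m_S$ with value $-\tfrac{\eta}{2}m_S^\top D_{SS}^{-1}m_S$, lower-bounds $-\mathcal L_{\adam}(\Delta^\star_{S_k^{(m)}})$ using $\max_{i\in\theta_{S_k^{(m)}}}(\sqrt{v_i}+\eps_{\adam})$ together with $\|m_{S_k^{(m)}}\|_2^2\ge\|m_S\|_2^2$, and then uses $\|m_S\|_2^2\ge\min_{i\in\theta_S}(\sqrt{v_i}+\eps_{\adam})\,m_S^\top D_{SS}^{-1}m_S$. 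The only cosmetic difference is that the paper works with the nonnegative quantities $-\mathcal L_{\adam}(\cdot)$, which avoids the sign bookkeeping you track explicitly.
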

\begin{remark}[Connection to practical optimization]
\Cref{prop:sparse-adam} shows that a group-sparse Adam update selects heads according to the first-moment magnitudes $\|m_h\|_2$, while \Cref{prop:sparse-adam-optimal} shows that the top-$k$ heads under this ranking approximately optimize Adam's local quadratic objective. Since $m_h$ is the exponential moving average of past gradients, the gradient norm $\mathcal G(\{h\})=\|\nabla_{\theta_h}\mathcal L(\theta_0)\|_2$ provides a natural surrogate for $\|m_h\|_2$. This provides an optimizer-level motivation beyond the gradient-flow analysis.
\end{remark}
\subsection{Head-sparse learning experiments}\label{sec:ranking-exp}

\textbf{Experimental setup.}
We evaluate head-sparse RL fine-tuning on Math, Instruction-following (IF), and Coding using \lm{Qwen3-1.7B}, \lm{Qwen3-4B}, and \lm{Mistral-7B-Instruct-v0.3}. For each task, we rank attention heads by gradient norms at the initial checkpoint and compare training the top 50\% and bottom 50\% heads, together with full-attention tuning and full tuning baselines. We also include random 50\% attention-head tuning, where 50\% head blocks are randomly selected per layer, as a baseline. All training jobs share the same training configuration, except that, given the substantially larger number of trainable parameters, we use smaller learning rates for full tuning of the \lm{Qwen3} series and for experiments on \lm{Mistral-7B-Instruct-v0.3}. Implementation details are provided in Appendix~\ref{sec:implementation}.

\textbf{Main results.}
\Cref{tab:sparse-final,tab:sparse-mistral} show a consistent advantage of gradient-based head selection. Top-50\% tuning outperforms bottom-50\% tuning across all model--task pairs and evaluation splits, and also generally outperforms the random-50\% baseline. While random selection occasionally performs better on some tasks for \lm{Qwen3-1.7B}, the advantage of top-ranked heads is more robust across various settings. Despite updating only half of the attention heads, top-50\% tuning often matches or exceeds full-attention and full tuning, with the same trend extending to \lm{Mistral-7B-Instruct-v0.3}. As shown in \Cref{tab:efficiency} in Appendix~\ref{sec:supp_tables}, head-sparse tuning also reduces the gradients to be stored and synchronized at each step, offering a significant communication-efficiency advantage in distributed training~\citep{fireworks2026frontier_rl,Miahi2026UnderstandingAE}. These results support gradient norm as an effective criterion for identifying heads useful for sparse RL tuning, consistent with the theoretical claims in \Cref{sec:ranking-theory}. See supplemental results in Appendix~\ref{sec:supp_curves}.

\textbf{Comments on learning rates.} Readers may wonder whether the weaker bottom 50\% results simply arise because their smaller gradient scales require larger learning rates. We note that, under the fixed training configuration and the same trainable-parameter budget, this already reflects a nontrivial training-efficiency gap; moreover, if the difference were only a uniform gradient rescaling, Adam's normalization would cancel this effect. We further sweep the learning rate for bottom 50\% attention-head tuning in Appendix~\ref{sec:lr_tuning}. It reveals that larger learning rates help in some cases, but the gap to top 50\% attention-head tuning generally remains.

\textbf{Head-ranking consistency.}
\Cref{tab:head-order-rank-distance} in Appendix~\ref{sec:supp_tables} reports Spearman distances between independently computed head rankings. Within-task distances are consistently small, indicating stable rankings across subsets of the same task. Cross-task distances are larger, reflecting task-specific differences in head importance, but remain small overall, suggesting a shared intrinsic hierarchy across tasks. In practice, this consistency also suggests that, when the training set is large, head rankings can be estimated from a small subset of the data to reduce the computation cost.

\textbf{Layer-ranking consistency.}
We further compare gradient-norm ranking with the candidate-layer rankings obtained by direct performance evaluation. As shown in \Cref{tab:layer-importance-pred}, the two rankings are highly consistent across the \lm{Qwen3-Base} series, and exactly match for \lm{Qwen3-4B-Base}. This provides additional evidence that gradient norm can serve as a useful proxy for identifying trainable attention components without exhaustively evaluating each candidate. The layer distributions of top heads are visualized in Appendix~\ref{sec:layer-dist}.

\textbf{Sparsity--performance trade-off.} Beyond the fixed 50\% sparsity setting, we further vary the number of trainable heads to examine how performance changes with the head budget. All configurations use the same training setup, differing only in the number of trainable attention heads. \Cref{fig:budget-sweep} shows that performance generally improves as more top-ranked heads are included, while the major performance can already be achieved with a relatively small head budget.

\begin{table}[H]
\vspace{-0.2in}
\caption{Sparse RL accuracy (\%) of \lm{Qwen3} series. Parenthesized arrows are the performance drops compared with top 50\%. The top-2 results are highlighted in pink. Top/bottom 50\% performance is reported as mean$\pm$std over three random rollout seeds $\{32,42,52\}$.}
\label{tab:sparse-final}
\centering
\small
\setlength{\tabcolsep}{3.5pt}
\begin{tabular}{lllcccccc}
\toprule
Model & Task & Eval split & Base & Top 50\% & Bottom 50\% & Rand 50\% & Full-attn & Full \\
\midrule
\lm{Qwen3-1.7B}
 & Math & In-dist test
 & \cellcolor[gray]{0.88} 26.6
 & \cellcolor[RGB]{255,225,235} 36.7 {\scriptsize $\pm 1.0$}
 & 30.9 {\scriptsize $\pm 1.0$} ($\downarrow 5.8$)
 & \cellcolor[RGB]{255,225,235} 38.6
 & 33.1
 & 35.3 \\
 & & Out-dist test
 & \cellcolor[gray]{0.88} 70.2
 & \cellcolor[RGB]{255,225,235} 76.7 {\scriptsize $\pm 0.9$}
 & 73.6 {\scriptsize $\pm 1.4$} ($\downarrow 3.1$)
 & \cellcolor[RGB]{255,225,235} 79.2
 & 74.2
 & 73.6 \\
\addlinespace[1pt]

\lm{Qwen3-1.7B}
 & IF & In-dist test
 & \cellcolor[gray]{0.88} 41.7
 & \cellcolor[RGB]{255,225,235} 57.3 {\scriptsize $\pm 0.8$}
 & 47.0 {\scriptsize $\pm 0.2$} ($\downarrow 10.3$)
 & 51.6
 & \cellcolor[RGB]{255,225,235} 59.4
 & 54.6 \\
 & & Out-dist test
 & \cellcolor[gray]{0.88} 17.7
 & \cellcolor[RGB]{255,225,235} 22.2 {\scriptsize $\pm 2.7$}
 & 15.9 {\scriptsize $\pm 0.8$} ($\downarrow 6.3$)
 & 19.7
 & \cellcolor[RGB]{255,225,235} 20.0
 & 19.7 \\
\addlinespace[1pt]

\lm{Qwen3-1.7B}
 & Coding & In-dist test
 & \cellcolor[gray]{0.88} 53.7
 & \cellcolor[RGB]{255,225,235} 59.5 {\scriptsize $\pm 1.2$}
 & 57.8 {\scriptsize $\pm 0.8$} ($\downarrow 1.8$)
 & 58.5
 & 58.2
 & \cellcolor[RGB]{255,225,235} 61.6 \\
 & & Out-dist test
 & \cellcolor[gray]{0.88} 33.6
 & \cellcolor[RGB]{255,225,235} 35.2 {\scriptsize $\pm 1.4$}
 & 33.9 {\scriptsize $\pm 0.4$} ($\downarrow 1.3$)
 & \cellcolor[RGB]{255,225,235} 35.6
 & 34.8
 & 34.4 \\
\addlinespace[1pt]

\lm{Qwen3-4B}
 & Math & In-dist test
 & \cellcolor[gray]{0.88} 42.4
 & \cellcolor[RGB]{255,225,235} 56.9 {\scriptsize $\pm 2.8$}
 & 50.8 {\scriptsize $\pm 2.5$} ($\downarrow 6.2$)
 & 53.3
 & 55.3
 & \cellcolor[RGB]{255,225,235} 58.2 \\
 & & Out-dist test
 & \cellcolor[gray]{0.88} 81.4
 & \cellcolor[RGB]{255,225,235} 86.6 {\scriptsize $\pm 1.9$}
 & 84.1 {\scriptsize $\pm 1.2$} ($\downarrow 2.5$)
 & 83.6
 & 86.0
 & \cellcolor[RGB]{255,225,235} 88.2 \\
\addlinespace[1pt]

\lm{Qwen3-4B}
 & IF & In-dist test
 & \cellcolor[gray]{0.88} 50.5
 & \cellcolor[RGB]{255,225,235} 60.7 {\scriptsize $\pm 3.8$}
 & 52.5 {\scriptsize $\pm 1.0$} ($\downarrow 8.2$)
 & 56.3
 & 56.2
 & \cellcolor[RGB]{255,225,235} 59.2 \\
 & & Out-dist test
 & \cellcolor[gray]{0.88} 21.0
 & \cellcolor[RGB]{255,225,235} 25.6 {\scriptsize $\pm 1.3$}
 & 25.4 {\scriptsize $\pm 1.0$} ($\downarrow 0.1$)
 & 25.3
 & \cellcolor[RGB]{255,225,235} 26.0
 & 24.0 \\
\addlinespace[1pt]

\lm{Qwen3-4B}
 & Coding & In-dist test
 & \cellcolor[gray]{0.88} 64.6
 & \cellcolor[RGB]{255,225,235} 69.8 {\scriptsize $\pm 1.6$}
 & \cellcolor[RGB]{255,225,235} 67.1 {\scriptsize $\pm 0.7$} ($\downarrow 2.7$)
 & 66.7
 & 65.1
 & 66.4 \\
 & & Out-dist test
 & \cellcolor[gray]{0.88} 46.3
 & \cellcolor[RGB]{255,225,235} 53.0 {\scriptsize $\pm 2.2$}
 & 47.0 {\scriptsize $\pm 3.4$} ($\downarrow 5.9$)
 & 37.9
 & \cellcolor[RGB]{255,225,235} 47.9
 & 43.6 \\
\bottomrule
\end{tabular}
\end{table}
\begin{table}[H]
\vspace{-0.2in}
\caption{Sparse RL accuracy (\%) of \lm{Mistral-7B-Instruct-v0.3}.   Parenthesized arrows are the performance drops compared with top 50\%. The highest results are highlighted in pink.}
\label{tab:sparse-mistral}
\centering
\small
\setlength{\tabcolsep}{3.5pt}
\begin{tabular}{lllcccc}
\toprule
Model & Task & Eval split & Base & Top 50\% & Bottom 50\% & Full-attn \\
\midrule
\lm{Mistral-7B-Instruct-v0.3} & IF & In-dist test & \cellcolor[gray]{0.88} 27.4 & \cellcolor[RGB]{255,225,235} 69.9 & 53.5 ($\downarrow 16.4$) & 56.9  \\
 &  & Out-dist test & \cellcolor[gray]{0.88} 18.3 & 21.0 & 20.3 ($\downarrow 0.7$) & \cellcolor[RGB]{255,225,235} 25.3  \\
\bottomrule
\end{tabular}
\end{table}
\begin{table}[H]
\centering
\small
\vspace{-0.2in}
\caption{Math candidate-layer rankings for \lm{Qwen3-Base} series. Rankings are obtained using the gradient norms of attention heads in each layer, computed on a $3$k-subset of \texttt{DAPO-Math-17k}.}
\label{tab:layer-importance-pred}
\begin{tabular}{l l l}
\toprule
Model & Performance ranking~\citep{Zhang2026IsOL} & Gradient norm ranking \\
\midrule
\lm{Qwen3-1.7B-Base} &
$10 > 12 > 1 > 7 > 24$ &
$12 > 10 > 7 > 24 > 1$ \\
\lm{Qwen3-4B-Base} &
$16 > 14 > 11 > 24 > 2$ &
$16 > 14 > 11 > 24 > 2$ \\
\lm{Qwen3-8B-Base} &
$16 > 15 > 8 > 2 > 0$ &
$15 \approx 16 > 0 > 8 > 2$ \\
\bottomrule
\end{tabular}
\end{table}

\begin{figure}[htbp]
\centering
\begin{minipage}[htbp]{0.45\textwidth}\centering
\includegraphics[width=\linewidth]{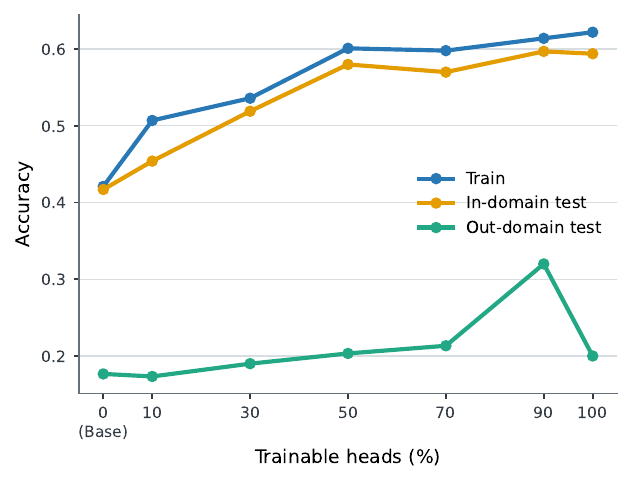}\\[-0.2em]
{\small (a) \lm{Qwen3-1.7B}, IF}
\end{minipage}\hfill
\begin{minipage}[htbp]{0.45\textwidth}\centering
\includegraphics[width=\linewidth]{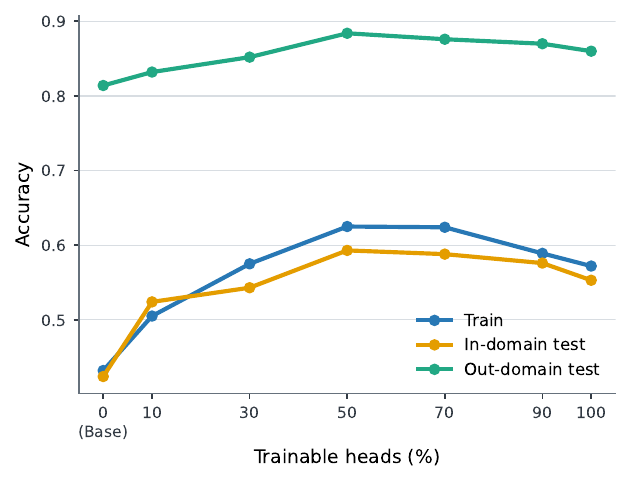}\\[-0.2em]
{\small (b) \lm{Qwen3-4B}, Math}
\end{minipage}
\caption{Sparsity--performance trade-off on training, in-distribution test, and out-of-distribution test sets. Each panel includes top 10, 30, 50, 70, and 90\% attention-head tuning plus full-attention tuning. All jobs share exactly the same training configuration.}
\label{fig:budget-sweep}
\end{figure}
\section{Partitioning the Attention Heads}
\subsection{Theoretical understandings}\label{sec:partition-theory}
We consider sequential learning of two tasks $A$ and $B$. Starting from a base model $\theta_0$, task $A$ is trained on a head set $S_A$ to obtain $\theta_A$, after which task $B$ is trained on a head set $S_B$ from $\theta(0)=\theta_A$. Recall that \Cref{sec:ranking} motivates using high-gradient-norm heads for task $A$. 

Updating the same parameters for task $B$ can overwrite the parameter changes induced by task $A$. This concern is particularly pronounced when the trainable parameters are restricted to a sparse subset, motivating the use of disjoint head sets across tasks. Sometimes we may assume an exact partition, $S_B\subseteq S_A^c$.

We now characterize the trade-off between learning task $B$ and preserving task $A$, \textit{i.e.}, plasticity and stability. Omitted proofs are deferred to Appendix~\ref{sec:omitted-partition-theory}.

\begin{proposition}
\label{prop:continual-gradient-flow}
Let $\theta(t)$ follow task-$B$ Euclidean gradient flow restricted to $S_B$, initialized at $\theta(0)=\theta_A$. Then $-\frac{d}{dt}\mathcal L_B(\theta(t))=\mathcal G_B(S_B;\theta(t))^2$, while the change in the task-$A$ loss satisfies $\left|\frac{d}{dt}\mathcal L_A(\theta(t))\right|\le\mathcal G_A(S_B;\theta(t))\mathcal G_B(S_B;\theta(t))$.
\end{proposition}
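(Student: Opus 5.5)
The plan is to follow the same chain-rule argument as in \Cref{prop:gradnorm_flow_eq}, now keeping track of two losses along a single trajectory. The restricted task-$B$ flow is
\begin{align*}
\dot\theta_{S_B}(t)=-\nabla_{\theta_{S_B}}\mathcal L_B(\theta(t)),\qquad \dot\theta_{S_B^c}(t)=0,\qquad \theta(0)=\theta_A\;.
\end{align*}
We assume $\mathcal L_A$ and $\mathcal L_B$ are twice continuously differentiable on the region the trajectory visits. This is enough to make both $t\mapsto\mathcal L_A(\theta(t))$ and $t\mapsto\mathcal L_B(\theta(t))$ differentiable.

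\textbf{Plasticity identity.} By the chain rule,
\begin{align*}
\frac{d}{dt}\mathcal L_B(\theta(t))=\nabla_\theta\mathcal L_B(\theta(t))^\top\dot\theta(t)\;.
\end{align*}
Because $\dot\theta$ vanishes outside $S_B$, only the $S_B$ block contributes. The inner product therefore reduces to $-\|\nabla_{\theta_{S_B}}\mathcal L_B(\theta(t))\|_2^2=-\mathcal G_B(S_B;\theta(t))^2$. This is exactly \Cref{prop:gradnorm_flow_eq} applied to $\mathcal L_B$ with $S=S_B$, so that result can be cited directly.

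\textbf{Stability bound.} Applying the chain rule to $\mathcal L_A$ along the same trajectory gives
\begin{align*}
\frac{d}{dt}\mathcal L_A(\theta(t))=\nabla_\theta\mathcal L_A(\theta(t))^\top\dot\theta(t)=-\nabla_{\theta_{S_B}}\mathcal L_A(\theta(t))^\top\nabla_{\theta_{S_B}}\mathcal L_B(\theta(t))\;.
\end{align*}
Again the blocks outside $S_B$ drop out, because the velocity is zero there. Cauchy--Schwarz on the $S_B$ block then bounds the absolute value by $\|\nabla_{\theta_{S_B}}\mathcal L_A\|_2\,\|\nabla_{\theta_{S_B}}\mathcal L_B\|_2=\mathcal G_A(S_B;\theta(t))\,\mathcal G_B(S_B;\theta(t))$, which is the claimed inequality.

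There is no real obstacle in this argument. The only point needing care is the observation that the velocity is supported on $S_B$, which lets us restrict the inner product to the $S_B$ block before applying Cauchy--Schwarz. That restriction is what makes the bound depend on task-$A$ gradients only through $\mathcal G_A(S_B;\cdot)$. This sets up the partition argument: if task $A$'s gradient mass lies largely outside $S_B$, the task-$A$ loss changes only slowly during training on task $B$.
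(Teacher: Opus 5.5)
Your proposal is correct and follows the same route as the paper's proof. You cite \Cref{prop:gradnorm_flow_eq} for the task-$B$ identity, use the chain rule with velocity supported on $S_B$ to get $\frac{d}{dt}\mathcal L_A(\theta(t))=-\langle\nabla_{\theta_{S_B}}\mathcal L_A(\theta(t)),\nabla_{\theta_{S_B}}\mathcal L_B(\theta(t))\rangle$, and finish with Cauchy--Schwarz.
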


\begin{proof}
The first equation is by \Cref{prop:gradnorm_flow_eq}. The gradient flow satisfies $\dot\theta_{S_B}(t)=-\nabla_{\theta_{S_B}}\mathcal L_B(\theta(t))$ and $\dot\theta_{S_{B}^c}(t)=0$, so the chain rule gives $\frac{d}{dt}\mathcal L_A(\theta(t))=-\langle\nabla_{\theta_{S_B}}\mathcal L_A(\theta(t)),\nabla_{\theta_{S_B}}\mathcal L_B(\theta(t))\rangle$.
The desired bound follows from Cauchy--Schwarz inequality.
\end{proof}

\begin{corollary}
\label{cor:gradnorm-forgetting-average}
Define the contraharmonic time-averaged gradient norm by
\begin{align*}
    \widetilde{\mathcal G}_B(S_B;t):=\frac{\int_0^t\mathcal G_B(S_B;\theta(s))^2\,ds}{\int_0^t\mathcal G_B(S_B;\theta(s))\,ds}\;,\quad \widetilde{\mathcal G}_{A\mid B}(S_B;t):=\frac{\int_0^t\mathcal G_A(S_B;\theta(s))\mathcal G_B(S_B;\theta(s))\,ds}{\int_0^t\mathcal G_B(S_B;\theta(s))\,ds}\;.
\end{align*}
Then $\left|\mathcal L_A(\theta(t))-\mathcal L_A(\theta_A)\right|\le\frac{\widetilde{\mathcal G}_{A\mid B}(S_B;t)}{\widetilde{\mathcal G}_B(S_B;t)}[\mathcal L_B(\theta_A)-\mathcal L_B(\theta(t))]$.
\end{corollary}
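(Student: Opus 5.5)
The plan is to integrate the two pointwise relations of \Cref{prop:continual-gradient-flow} over $[0,t]$ and then rewrite both integrals in terms of the common normalizer $Z(t):=\int_0^t\mathcal G_B(S_B;\theta(s))\,ds$.

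First, for the task-$B$ progress, I will integrate $-\frac{d}{dt}\mathcal L_B(\theta(s))=\mathcal G_B(S_B;\theta(s))^2$ from $0$ to $t$. Using $\theta(0)=\theta_A$, this gives
\begin{align*}
\mathcal L_B(\theta_A)-\mathcal L_B(\theta(t))=\int_0^t\mathcal G_B(S_B;\theta(s))^2\,ds=\widetilde{\mathcal G}_B(S_B;t)\,Z(t)\;,
\end{align*}
where the last equality is just the definition of the contraharmonic average.

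Second, for the task-$A$ change, I will write $\mathcal L_A(\theta(t))-\mathcal L_A(\theta_A)=\int_0^t\frac{d}{ds}\mathcal L_A(\theta(s))\,ds$. I then take absolute values, move them inside the integral, and apply the pointwise bound of \Cref{prop:continual-gradient-flow}:
\begin{align*}
\left|\mathcal L_A(\theta(t))-\mathcal L_A(\theta_A)\right|\le\int_0^t\mathcal G_A(S_B;\theta(s))\mathcal G_B(S_B;\theta(s))\,ds=\widetilde{\mathcal G}_{A\mid B}(S_B;t)\,Z(t)\;.
\end{align*}
The integrands are continuous because the losses are $C^2$ and the flow is $C^1$, so all integrals exist.

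Third, I will divide the second identity by the first. This gives $Z(t)=[\mathcal L_B(\theta_A)-\mathcal L_B(\theta(t))]/\widetilde{\mathcal G}_B(S_B;t)$, and substituting it into the task-$A$ bound yields exactly the claimed inequality.

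The only real obstacle is the degenerate case $Z(t)=0$, where the two averages are undefined. By continuity this happens only if $\mathcal G_B(S_B;\theta(s))\equiv 0$ on $[0,t]$. In that case the flow is stationary, so both sides vanish, and the statement can be read as trivially true under any convention (or the case excluded).
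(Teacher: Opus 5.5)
Your proposal is correct and follows essentially the same route as the paper: integrate the two pointwise relations of \Cref{prop:continual-gradient-flow}, express both integrals through the common normalizer $Z(t)=\int_0^t\mathcal G_B(S_B;\theta(s))\,ds$, and substitute. Your treatment of the degenerate case $Z(t)=0$, where the averages are undefined but the flow is stationary, is a harmless refinement that the paper leaves implicit.
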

\begin{remark}
For task $B$, we prefer heads with small task-$A$ but large task-$B$ gradient norm. When $S_B\subseteq S_A^c$, since heads in $S_A^c$ are not directly updated during task-$A$ training, their task-$A$ gradient norms may remain controlled by their initial values when the gradient changes smoothly. This suggests balancing the allocation by assigning some high-task-$A$-gradient heads to task $A$ and some high-task-$B$-gradient heads to task $B$, motivating our later allocation strategy design.
\end{remark}
\begin{proposition}
\label{prop:gradnorm-forgetting}
Suppose the task-$B$ restricted gradient flow exists on $[0,T]$, and $\exists\beta_A,\beta_B\ge0$ s.t.
\begin{align*}
    -\frac{\nabla_{\theta_{S_B}}\mathcal L_A(\theta(t))^\top\nabla_{\theta_{S_B}}^2\mathcal L_A(\theta(t))\nabla_{\theta_{S_B}}\mathcal L_B(\theta(t))}{\|\nabla_{\theta_{S_B}}\mathcal L_A(\theta(t))\|_2\|\nabla_{\theta_{S_B}}\mathcal L_B(\theta(t))\|_2}\le\beta_A\;,\;\frac{\nabla_{\theta_{S_B}}\mathcal L_B(\theta(t))^\top\nabla_{\theta_{S_B}}^2\mathcal L_B(\theta(t))\nabla_{\theta_{S_B}}\mathcal L_B(\theta(t))}{\|\nabla_{\theta_{S_B}}\mathcal L_B(\theta(t))\|_2^2}\le\beta_B\;,
\end{align*}
whenever the corresponding gradient norms $>0$. Then, for every $t\in(0,T]$,
\begin{align*}
    \left|\mathcal L_A(\theta(t))-\mathcal L_A(\theta_A)\right|\le\frac{{\mathcal G}_{A}(S_B;\theta_A)}{\widetilde{\mathcal G}_B(S_B;t)}[\mathcal L_B(\theta_A)-\mathcal L_B(\theta(t))]+\frac{\beta_A\left[\mathcal L_B(\theta_A)-\mathcal L_B(\theta(t))\right]^2}{2\widetilde{\mathcal G}_B(S_B;t)^2}\;,
\end{align*}
where $\widetilde{\mathcal G}_B(S_B;t)\ge\sqrt{\frac{1-e^{-2\beta_Bt}}{2\beta_Bt}}\,\mathcal G_B(S_B;\theta_A)$.
\end{proposition}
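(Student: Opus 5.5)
The plan is to reduce everything to two scalar functions along the task-$B$ flow, $g_A(s):=\mathcal G_A(S_B;\theta(s))$ and $g_B(s):=\mathcal G_B(S_B;\theta(s))$, and to set $I(t):=\int_0^t g_B(s)\,ds$. By \Cref{prop:continual-gradient-flow}, $\mathcal L_B(\theta_A)-\mathcal L_B(\theta(t))=\int_0^t g_B^2\,ds$, so the definition in \Cref{cor:gradnorm-forgetting-average} gives
$$I(t)=\frac{\mathcal L_B(\theta_A)-\mathcal L_B(\theta(t))}{\widetilde{\mathcal G}_B(S_B;t)}.$$
Integrating the bound in \Cref{prop:continual-gradient-flow} gives $|\mathcal L_A(\theta(t))-\mathcal L_A(\theta_A)|\le\int_0^t g_A(s)g_B(s)\,ds$. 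It therefore suffices to control $g_A(s)$ linearly in terms of $\int_0^s g_B$.

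\textbf{Step 1 (growth of $g_A$).} Only the $S_B$ block moves, with $\dot\theta_{S_B}=-\nabla_{\theta_{S_B}}\mathcal L_B$. Hence $\frac{d}{ds}\nabla_{\theta_{S_B}}\mathcal L_A=-\nabla^2_{\theta_{S_B}}\mathcal L_A\,\nabla_{\theta_{S_B}}\mathcal L_B$. Wherever $g_A>0$ this gives
$$\dot g_A=-\frac{\nabla_{\theta_{S_B}}\mathcal L_A^\top\nabla^2_{\theta_{S_B}}\mathcal L_A\nabla_{\theta_{S_B}}\mathcal L_B}{g_A},$$
and the first hypothesis yields $\dot g_A\le\beta_A g_B$. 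Therefore $g_A(s)\le g_A(0)+\beta_A\int_0^s g_B$.

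\textbf{Step 2 (integration).} Substituting Step 1 into the forgetting integral gives
$$\int_0^t g_Ag_B\le g_A(0)I(t)+\beta_A\int_0^t g_B(s)\int_0^s g_B(r)\,dr\,ds=g_A(0)I(t)+\tfrac{\beta_A}{2}I(t)^2.$$
Inserting $I(t)=[\mathcal L_B(\theta_A)-\mathcal L_B(\theta(t))]/\widetilde{\mathcal G}_B(S_B;t)$ and $g_A(0)=\mathcal G_A(S_B;\theta_A)$ gives exactly the claimed inequality.

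\textbf{Step 3 (lower bound on $\widetilde{\mathcal G}_B$).} By Cauchy--Schwarz, $I(t)\le\sqrt{t\int_0^t g_B^2}$, so $\widetilde{\mathcal G}_B(S_B;t)\ge\sqrt{\frac1t\int_0^t g_B^2}=\overline{\mathcal G}_B$, the time-averaged norm. The second hypothesis gives $\frac{d}{ds}g_B^2=-2\nabla_{\theta_{S_B}}\mathcal L_B^\top\nabla^2_{\theta_{S_B}}\mathcal L_B\nabla_{\theta_{S_B}}\mathcal L_B\ge-2\beta_Bg_B^2$. Grönwall then yields $g_B(s)^2\ge e^{-2\beta_Bs}g_B(0)^2$. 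Integrating over $[0,t]$ gives the stated factor $\sqrt{(1-e^{-2\beta_Bt})/(2\beta_Bt)}$, which is the same computation as the lower half of \Cref{prop:gradnorm_flow}. When $\beta_B=0$ the factor is read as its limit $1$.

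\textbf{Main obstacle.} The expected difficulty is the non-differentiability of $g_A$ at times where $g_A=0$, which matters because the hypothesis only applies when the gradient norms are positive. I would first work with $\sqrt{g_A^2+\varepsilon}$, whose derivative is $\langle\nabla_{\theta_{S_B}}\mathcal L_A,\frac{d}{ds}\nabla_{\theta_{S_B}}\mathcal L_A\rangle/\sqrt{g_A^2+\varepsilon}$. At times with $g_A>0$ this is bounded by $\beta_Ag_B$ using the hypothesis. At times with $g_A=0$ the numerator vanishes. The estimate thus holds everywhere, and letting $\varepsilon\to0$ recovers Step 1. The Grönwall step for $g_B^2$ needs no such care, because $g_B^2$ is $C^1$ and the inequality is trivial where $g_B=0$. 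Finally, if $I(t)=0$ then the flow is stationary and both sides vanish, so that degenerate case is excluded from the division.
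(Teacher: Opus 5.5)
Your proposal is correct and follows the paper's proof essentially step for step. Both arguments use the same ingredients: the bound $\mathcal G_A(S_B;\theta(s))\le\mathcal G_A(S_B;\theta_A)+\beta_A\int_0^s\mathcal G_B(S_B;\theta(u))\,du$, the identity $\int_0^t g_B(s)\int_0^s g_B(r)\,dr\,ds=\tfrac12 I(t)^2$, the substitution $I(t)=[\mathcal L_B(\theta_A)-\mathcal L_B(\theta(t))]/\widetilde{\mathcal G}_B(S_B;t)$, and Cauchy--Schwarz plus Gr\"onwall for the lower bound on $\widetilde{\mathcal G}_B$. Your $\sqrt{g_A^2+\varepsilon}$ regularization and your direct use of the one-sided $\beta_B$ hypothesis tighten points the paper leaves implicit, since the paper cites the two-sided \Cref{prop:gradnorm_flow} and literally writes the bound for the pointwise norm rather than the time average.
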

\begin{remark}
Similar to \Cref{prop:gradnorm_flow}, this result demonstrates that the initial gradient norms can remain informative for controlling the stability--plasticity trade-off over a nontrivial training interval. In our experiments in \Cref{sec:partition-exp}, rankings computed at the initial checkpoint effectively guide continual training over hundreds of training steps. See \Cref{sec:attention-curvature} for discussions on the corresponding second-order assumptions for attention blocks.
\end{remark}
\subsection{Sequential continual learning experiments}\label{sec:partition-exp}

\paragraph{Experimental setup.}
We evaluate whether partitioning attention heads can reduce interference in continual RL. We sequentially fine-tune \lm{Qwen3-1.7B} and \lm{Qwen3-4B} on two- and three-task sequences over Math, IF, and Coding. We compare head-partitioned tuning with continual full-attention and full-tuning baselines. Implementation details are provided in Appendix~\ref{sec:implementation}.

Motivated by \Cref{sec:partition-theory}, we consider three head-allocation strategies:
\begin{itemize}[leftmargin=*, itemsep=1pt, topsep=2pt]
\item \textbf{Top--Bottom}: the previous task uses top-ranked heads, while the next task uses the remaining.
\item \textbf{Round-Robin}: tasks alternately select their highest-ranked unassigned heads.
\item \textbf{Shared}: in addition to disjoint task-specific heads, a remaining subset is shared across tasks.
\end{itemize}
For two-task experiments, we compare the first two strategies. For three-task experiments, we use the third, assigning 25\% of the heads to each task and sharing the remaining 25\%. It is worth emphasizing that we do not claim these strategies to be optimal and exhaustive. Instead, they serve to validate our theory and demonstrate possible ways to apply it.

\paragraph{Main results.} \Cref{fig:continual-qwen317b-math-mbpp,fig:continual-qwen34b-mbpp-ifeval,fig:continual-qwen34b-math-ifeval} show that head partitioning generally reduces forgetting while preserving adaptation to the new task, especially when full-attention tuning causes large drops on earlier tasks. Top--Bottom typically retains the first task better but adapts slightly worse to the second, while Round-Robin shows the opposite trade-off, consistent with their allocation strategies. \Cref{fig:continual-qwen317b-3task,fig:continual-qwen34b-3task} further show that the head-partition approach extends to three sequential tasks with minimal degradation on previous tasks, supporting the theoretical claims in \Cref{sec:partition-theory}. See supplemental results in Appendix~\ref{sec:supp_tables}.

\begin{figure}[H]
\centering
\includegraphics[width=0.9\textwidth]{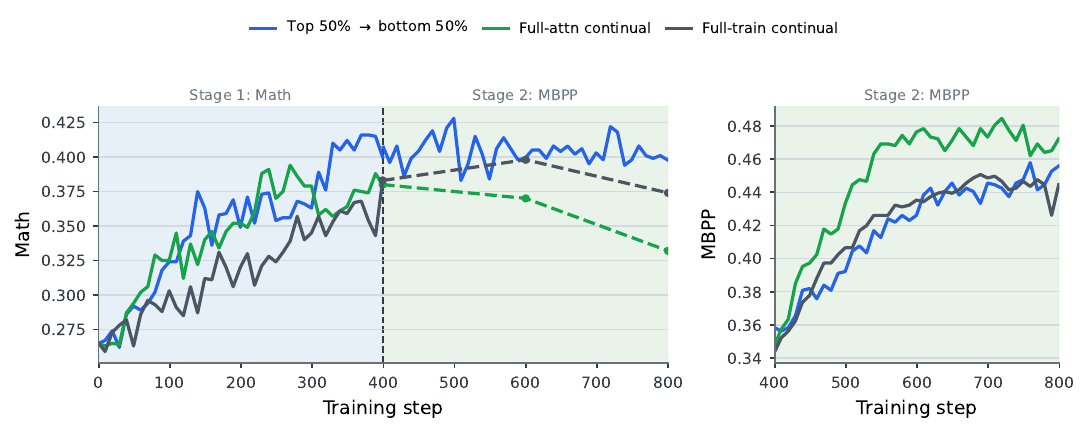}
\caption{Continual RL accuracy of \lm{Qwen3-1.7B} (Math $\rightarrow$ Coding). The model is first trained on Math task and then on Coding task. Shaded regions mark stages.}
\label{fig:continual-qwen317b-math-mbpp}
\end{figure}

\begin{figure}[H]
\centering
\includegraphics[width=0.9\textwidth]{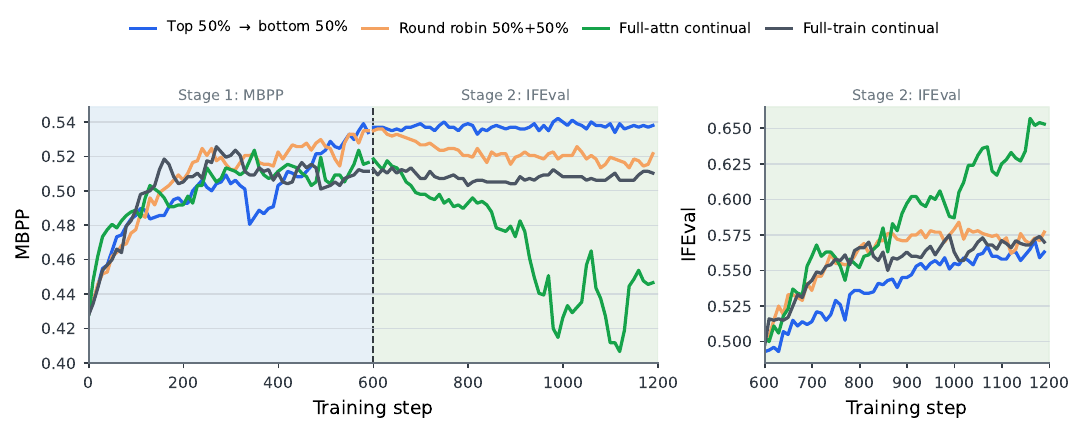}
\vspace{-0.2in}
\caption{Continual RL accuracy of \lm{Qwen3-4B} (Coding $\rightarrow$ IF). The model is first trained on Coding task and then on IF task. Shaded regions mark stages.}
\label{fig:continual-qwen34b-mbpp-ifeval}
\end{figure}

\begin{figure}[H]
\vspace{-0.2in}
\centering
\includegraphics[width=0.9\textwidth]{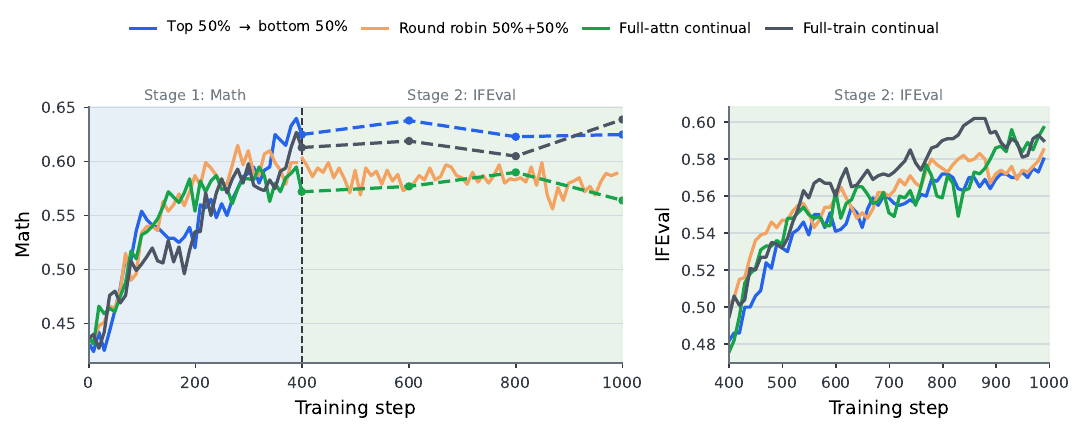}
\vspace{-0.2in}
\caption{Continual RL accuracy of \lm{Qwen3-4B} (Math $\rightarrow$ IF). The model is first trained on Math task and then on IF task. Shaded regions mark stages.}
\label{fig:continual-qwen34b-math-ifeval}
\end{figure}

\begin{figure}[h]
\vspace{-0.2in}
\centering
\includegraphics[width=0.9\textwidth]{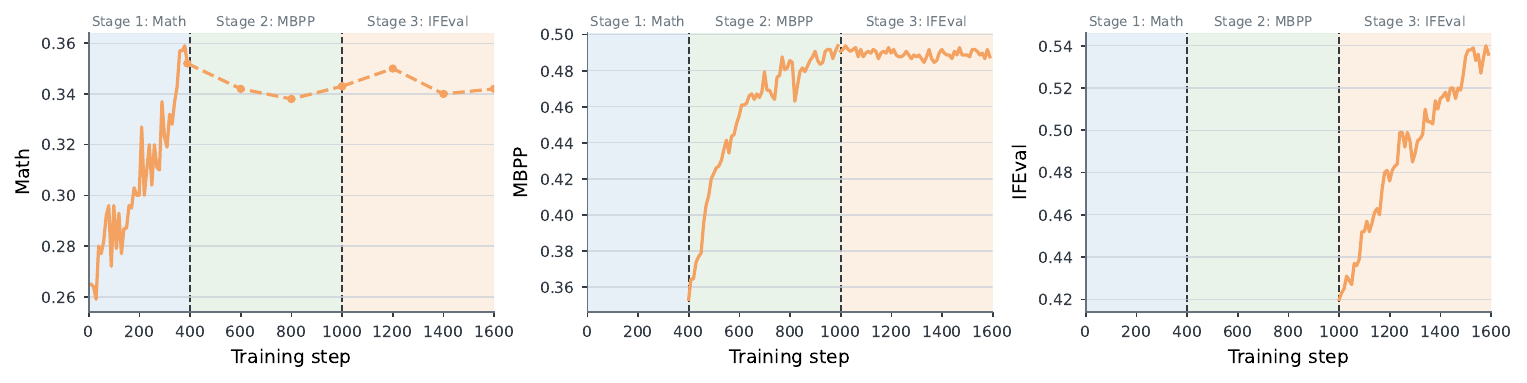}
\vspace{-0.2in}
\caption{Continual RL accuracy of \lm{Qwen3-1.7B} (Math $\rightarrow$ Coding $\rightarrow$ IF). The model is first trained on Math task, then on Coding task, and finally on IF task. Shaded regions mark stages.}
\label{fig:continual-qwen317b-3task}
\end{figure}

\begin{figure}[h]
\vspace{-0.2in}
\centering
\includegraphics[width=0.9\textwidth]{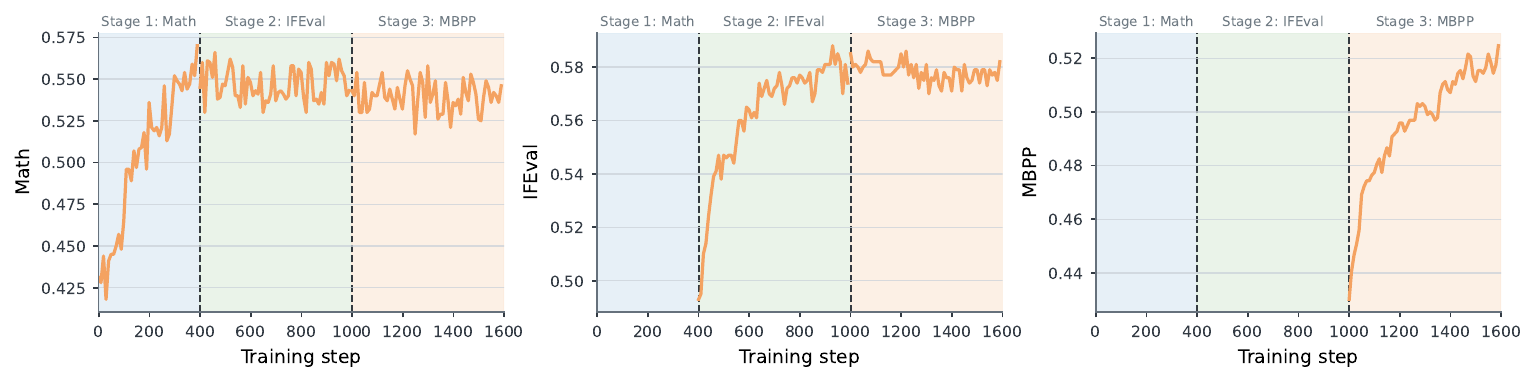}
\vspace{-0.2in}
\caption{Continual RL accuracy of \lm{Qwen3-4B} (Math $\rightarrow$ IF $\rightarrow$ Coding). The model is first trained on Math task, then on IF task, and finally on Coding task. Shaded regions mark stages.}
\label{fig:continual-qwen34b-3task}
\end{figure}

Furthermore, to test whether the learned adaptations are functionally localized, we reset the stage-$\mathrm I$ heads to their original values after stage $\mathrm {II}$. As shown in \Cref{tab:stage1-head-reset} in Appendix~\ref{sec:supp_tables}, this operation largely degrades stage-$\mathrm I$ performance while maintaining stage-$\mathrm{II}$ performance, suggesting approximate functional isolation across the allocated head subsets. This agrees with the observation of \citet{jin2025finetuning} that attention-only fine-tuning is well approximated by model linearization.

\section{Second-Order Structure of Sparse Attention-Head Updates}
\label{sec:attention-curvature}
In \Cref{sec:ranking-theory,sec:partition-theory}, we show that favorable second-order properties allow us to control how gradient norms evolve during fine-tuning. For $S\subseteq U$, $\|\nabla_{\theta_S}^2\mathcal L\|_{\mathrm{op}}\le\|\nabla_{\theta_U}^2\mathcal L\|_{\mathrm{op}}$, so sparsity generically controls second-order effects but does not explain why attention heads are preferable to arbitrary sparse parameter subsets. We next develop a more detailed, attention-head-specific second-order analysis. Omitted proofs are deferred to Appendix~\ref{sec:proof-of-attention-curvature}.

We consider a single multi-head self-attention layer on a fixed rollout batch and study the second-order bound w.r.t. its attention parameters, treating all subsequent computation as part of the loss. Let $F(\theta)$ denote the layer output and let $\mathcal L(\theta)=\ell(F(\theta))$. Following \citet{ormaniec2024}, we use the Gauss-Newton decomposition of the Hessian:
\begin{align*}
\mathbf H:=\nabla_\theta^2\mathcal L=\mathbf H_f+\mathbf H_o\;,\;
\mathbf H_f:=\frac{\partial\ell}{\partial F}\nabla_\theta^2F\;,\;
\mathbf H_o:=\left(\frac{\partial F}{\partial\theta}\right)^\top\nabla_F^2\ell(F)\left(\frac{\partial F}{\partial\theta}\right)\;.
\end{align*}
% \ruizhe{model distribution; expectation $H_f$; seems the expectation $H_f$ is not zero here because we are using policy gradient? i'm not sure}
\begin{proposition}
\label{prop:head-local-functional-curvature}
For multi-head self-attention and two heads $h\neq h'$, $(\mathbf H_f)_{hh'}=0$. Consequently, for any head set $S$, $\|(\mathbf H_f)_{SS}\|_{\mathrm{op}}=\max_{h\in S}\|(\mathbf H_f)_{hh}\|_{\mathrm{op}}$.
\end{proposition}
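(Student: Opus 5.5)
The plan is to exploit the additive structure of the layer output, $F(\theta)=\sum_{h=1}^H F_h(\theta_h)$ with $F_h(\theta_h):=A_hV_h$. Each summand depends only on its own block $\theta_h=(W_h^Q,W_h^K,W_h^V)$: $A_h$ is built from $XW_h^Q$ and $XW_h^K$, and $V_h=XW_h^V$. Consequently every entry $F_{ij}$ of the output is a sum of functions, each depending on a single head block.

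First, for any $h\neq h'$, the mixed second derivative block $\nabla^2_{\theta_h\theta_{h'}}F_{ij}=\sum_{g}\nabla^2_{\theta_h\theta_{h'}}(F_g)_{ij}$ vanishes. This holds because each $(F_g)_{ij}$ is a function of $\theta_g$ alone, so differentiating it with respect to parameters of two distinct heads gives zero; this uses the paper's standing assumption that the relevant maps are twice continuously differentiable. Next, $\mathbf H_f=\sum_{i,j}\frac{\partial\ell}{\partial F_{ij}}\nabla_\theta^2F_{ij}$ is a linear combination of these Hessians with scalar coefficients. Its $(h,h')$ block is therefore $\sum_{i,j}\frac{\partial\ell}{\partial F_{ij}}\nabla^2_{\theta_h\theta_{h'}}F_{ij}=0$.

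For the consequence, restricting to $S$ gives a block-diagonal matrix $(\mathbf H_f)_{SS}=\operatorname{diag}((\mathbf H_f)_{hh})_{h\in S}$. Each diagonal block is symmetric, being a block of a symmetric Hessian. I will then use the standard fact that the operator norm of a block-diagonal matrix equals the maximum of the operator norms of its blocks. One direction is $\|Mx\|_2^2=\sum_h\|M_{hh}x_h\|_2^2\le\max_h\|M_{hh}\|_{\mathrm{op}}^2\sum_h\|x_h\|_2^2$. The other direction follows by taking $x$ supported on the maximizing block.

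The main point requiring care is bookkeeping rather than difficulty. I must check that the head blocks $\theta_h$ in the paper's partition contain no shared parameters, such as a common output projection, since the stated layer has none. I also need to interpret $\frac{\partial\ell}{\partial F}\nabla_\theta^2F$ as the contraction above, so that linearity can be applied.
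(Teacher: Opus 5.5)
Your proposal is correct and takes essentially the same route as the paper. Additivity of the head outputs kills the mixed partials $\nabla^2_{\theta_h\theta_{h'}}F$, hence the off-diagonal blocks of $\mathbf H_f$, and the operator norm of the resulting block-diagonal matrix is the maximum over its blocks; you also prove that last norm identity, which the paper simply cites as standard.
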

\begin{remark}
Attention heads are naturally isolated as each head depends only on its own parameter block, making $H_f$ block-diagonal across heads. This does not hold for shared parameterizations such as standard LoRA, where the matrix $A$ is shared across attention heads, nor for more unstructured sparse parameter subsets. 
\end{remark}

We can further show that when attention concentrates on similar token representations, a sharper single-head bound is possible. Such a regime is consistent with the selective and specialized attention patterns observed empirically~\citep{yin2025which,Olsson2022IncontextLA}.

\begin{proposition}[Special case: concentrated attention]
\label{prop:concentrated-attention-curvature}
Consider one attention head $h$. Suppose that for every attention row $A_{i,:}$ there exists a token set $T_i$ s.t. $\sum_{j\in T_i}A_{ij}\ge1-\varepsilon$ and $\max_{j,j'\in T_i}\|X_{j,:}-X_{j',:}\|_2\le\rho$.
Assume the input and weight norms are bounded and $\|\nabla_F\ell(F)\|_2\le G$. Then we have
\begin{align*}
\|(\mathbf H_f)_{hh}\|_{\mathrm{op}}=O\left(G\left(\rho^2+\rho^3+\varepsilon\right)\right)\;.
\end{align*}
\end{proposition}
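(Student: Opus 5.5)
We need to bound $(\mathbf H_f)_{hh}=\sum_{i,c}(\partial\ell/\partial F_{ic})\nabla^2_{\theta_h}F_{ic}$. Since $\|\partial\ell/\partial F\|_2\le G$, Cauchy--Schwarz over the output entries gives
\[
\|(\mathbf H_f)_{hh}\|_{\mathrm{op}}\le G\sup_{\|u\|_2=1}\Big(\sum_{i,c}\big(u^\top\nabla^2_{\theta_h}F_{ic}\,u\big)^2\Big)^{1/2}.
\]
So it suffices to bound the second directional derivative $D^2F[u,u]$ of the head output $F=A_hV_h$ along an arbitrary unit direction $u=(U^Q,U^K,U^V)$ in $\theta_h$. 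Write $F_{i,:}=\sum_jA_{ij}X_{j,:}W^V$ and $s_{ij}=X_{i,:}W^QW^{K\top}X_{j,:}^\top/\sqrt{d_k}$. Since $F$ is linear in $W^V$, the $VV$ block vanishes. Two kinds of terms remain.

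\begin{itemize}
\item \emph{Cross terms (QK with V):} these are $\sum_j DA_{ij}\,X_{j,:}U^V$.
\item \emph{Pure QK terms:} these are $\sum_j D^2A_{ij}\,X_{j,:}W^V$.
\end{itemize}

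The key fact is the softmax identity: the first derivative $DA_{ij}=A_{ij}(Ds_{ij}-\bar{Ds}_i)$ and the second derivative $D^2A_{ij}$ are both "centered". They sum to zero over $j$ and are weighted by $A_{ij}$, so they act like covariances under the row distribution $A_{i,:}$. Concretely:
\begin{itemize}
\item $\sum_j DA_{ij}X_j=\mathrm{Cov}_{A_i}(Ds_{i\cdot},X_\cdot)$;
\item $\sum_j D^2A_{ij}X_j$ is a combination of $\mathrm{Cov}_{A_i}(D^2s,X)$, $\mathrm{Cov}_{A_i}\big((Ds)^2,X\big)$, and a third-central-moment term $\mathbb E_{A_i}\big[(Ds-\bar{Ds})^2(X-\bar X)\big]$-type expression.
\end{itemize}
Moreover, $Ds_{ij}$ is linear in $X_j$ and $D^2s_{ij}$ is bilinear, with coefficients bounded by the bounded input and weight norms. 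So every covariance can be controlled by fluctuations of $X_j$ around a reference point.

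Fix the reference point $\bar x_i$ to be any token in $T_i$, and split each expectation over $j$ into $j\in T_i$ and $j\notin T_i$.

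\begin{itemize}
\item \emph{Contributions from $j\notin T_i$:} these carry total mass at most $\varepsilon$, and all integrands are bounded. So this part is $O(\varepsilon)$.
\item \emph{Contributions from $j\in T_i$:} here $\|X_j-\bar x_i\|\le\rho$. Because $Ds_{ij}$ is affine in $X_j$, its fluctuation $Ds_{ij}-Ds_{i\bar j}$ is $O(\rho)$.
\end{itemize}

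Hence:
\begin{itemize}
\item a covariance of two quantities that each fluctuate by $O(\rho)$ is $O(\rho^2)+O(\varepsilon)$;
\item the third-moment term, with three $O(\rho)$ fluctuations, is $O(\rho^3)+O(\varepsilon)$.
\end{itemize}
Mixed terms where one factor fluctuates inside $T_i$ while the mean is dragged by mass outside $T_i$ give products like $\rho\varepsilon$. These are absorbed into $O(\rho^2+\varepsilon)$ by $\rho\varepsilon\le\rho^2+\varepsilon^2$ and $\varepsilon\le1$.

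Summing over the finitely many rows $i$ and columns $c$, with $L$ and $d$ treated as constants, gives the claim.

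The main obstacle is the careful centering in the second-order softmax term. One must verify that $D^2A_{ij}$ is exactly expressible through central moments under $A_{i,:}$, with no uncentered remainder that would only give an $O(1)$ bound. I would first derive $D^2A_{ij}=A_{ij}\big[(Ds_{ij}-\bar{Ds}_i)^2-\mathrm{Var}_{A_i}(Ds)+D^2s_{ij}-\overline{D^2s}_i\big]$ and check that it integrates to zero against constants. The $\rho^2$ versus $\rho^3$ distinction then follows from counting the number of centered factors in each term.
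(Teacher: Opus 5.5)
Your proposal is correct and follows essentially the same route as the paper. Both arguments express the single-head functional Hessian through the second and third centered moments of the tokens under each attention row $A_{i,:}$, then use the concentration hypothesis (a reference token in $T_i$, plus $O(\varepsilon)$ leakage from mass outside $T_i$) to show these moments are $O(\rho^2+\varepsilon)$ and $O(\rho^3+\varepsilon)$. The only real difference is that you derive the centered form of $D^2A_{ij}$ directly from the softmax identities, whereas the paper obtains this moment representation by citing Theorem~3.2 and Remark~3.1 of \citet{ormaniec2024}.
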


\begin{proposition}
\label{prop:outer-hessian-entk}
Define the tangent kernel $K_S:=\frac{\partial F}{\partial\theta_S}\left(\frac{\partial F}{\partial\theta_S}\right)^\top$ for a head set $S$.
If the outer objective is locally $\beta$-smooth in the attention output, \textit{i.e.}, $\|\nabla_F^2\ell(F)\|_{\mathrm{op}}\le\beta$,
then
\begin{align*}
\|(\mathbf H_o)_{SS}\|_{\mathrm{op}}\le\beta\|K_S\|_{\mathrm{op}}\;.
\end{align*}
\end{proposition}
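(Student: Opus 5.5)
We prove \Cref{prop:outer-hessian-entk} by a direct operator-norm computation on the Gauss--Newton term, followed by the identity relating a Jacobian's operator norm to that of its Gram matrix. We first write $J_S:=\frac{\partial F}{\partial\theta_S}$, viewing $F$ as a vector in $\mathbb R^{Ld}$ via vectorization. Then $J_S$ is a linear map from the parameter block $\theta_S$ to $\mathbb R^{Ld}$. From the definition of $\mathbf H_o$, restricting rows and columns to the head set $S$ simply selects the corresponding columns of $\frac{\partial F}{\partial\theta}$. Hence
\begin{align*}
(\mathbf H_o)_{SS}=J_S^\top\,\nabla_F^2\ell(F)\,J_S\;.
\end{align*}
This block extraction is the only place the head structure enters, and it needs no assumption beyond the column partition of the Jacobian.

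Next we bound the operator norm. Submultiplicativity gives
\begin{align*}
\|(\mathbf H_o)_{SS}\|_{\mathrm{op}}\le\|J_S^\top\|_{\mathrm{op}}\,\|\nabla_F^2\ell(F)\|_{\mathrm{op}}\,\|J_S\|_{\mathrm{op}}\le\beta\,\|J_S\|_{\mathrm{op}}^2\;.
\end{align*}
The second inequality uses the local $\beta$-smoothness hypothesis. Equivalently, for a unit vector $u$ we have $|u^\top J_S^\top\nabla_F^2\ell J_S u|\le\beta\|J_Su\|_2^2$, and since $(\mathbf H_o)_{SS}$ is symmetric, its operator norm equals the supremum of this quadratic form. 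This route does not require $\nabla_F^2\ell$ to be positive semidefinite, since only the bound on its operator norm is used.

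Finally, we identify $\|J_S\|_{\mathrm{op}}^2$ with $\|K_S\|_{\mathrm{op}}$. By the singular value decomposition, $\|J_S\|_{\mathrm{op}}^2=\sigma_{\max}(J_S)^2=\lambda_{\max}(J_SJ_S^\top)=\|K_S\|_{\mathrm{op}}$, because $K_S=J_SJ_S^\top$ is positive semidefinite and shares its nonzero eigenvalues with $J_S^\top J_S$. Combining the three steps gives the claim.

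I do not expect a real obstacle here. The only point requiring care is making the vectorization convention consistent, so that $\nabla_F^2\ell$ and $K_S$ act on the same space $\mathbb R^{Ld}$ and the operator norms are the Euclidean ones.
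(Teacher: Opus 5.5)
Your proof is correct and matches the paper's argument: the paper also writes $(\mathbf H_o)_{SS}=\bigl(\frac{\partial F}{\partial\theta_S}\bigr)^\top\nabla_F^2\ell(F)\frac{\partial F}{\partial\theta_S}$, applies submultiplicativity to get $\beta\|\frac{\partial F}{\partial\theta_S}\|_{\mathrm{op}}^2$, and identifies this with $\beta\|K_S\|_{\mathrm{op}}$. Your SVD justification of $\|J_S\|_{\mathrm{op}}^2=\|K_S\|_{\mathrm{op}}$ spells out a step the paper leaves implicit.
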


\begin{remark}
    \citet{jin2025finetuning} suggest that attention-only fine-tuning exhibits kernel-like behavior. In such a locally linearized regime, the Jacobian is unlikely to change rapidly, so $K_S$ remains close to its initial value and its operator norm stays controlled.
\end{remark}

\begin{corollary}
\label{cor:attention-second-order}
Under the assumptions of \Cref{prop:outer-hessian-entk}, for any head set $S$ and any vector $v_S\neq0$, whenever $\nabla_{\theta_S}\mathcal L\neq0$, we have
\begin{align*}
\frac{\left|(\nabla_{\theta_S}\mathcal L)^\top\mathbf H_{SS}v_S\right|}{\|\nabla_{\theta_S}\mathcal L\|_2\|v_S\|_2}\le\max_{h\in S}\|(\mathbf H_f)_{hh}\|_{\mathrm{op}}+\beta\|K_S\|_{\mathrm{op}}\;.
\end{align*}
\end{corollary}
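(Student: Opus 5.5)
The bound comes from splitting the restricted Hessian with the Gauss--Newton decomposition, $\mathbf H_{SS}=(\mathbf H_f)_{SS}+(\mathbf H_o)_{SS}$, and handling each piece with one of the preceding propositions.

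First, apply Cauchy--Schwarz to the bilinear form. For any vectors $g_S\neq0$ and $v_S\neq0$,
\begin{align*}
\frac{|g_S^\top \mathbf H_{SS} v_S|}{\|g_S\|_2\|v_S\|_2}\le\|\mathbf H_{SS}\|_{\mathrm{op}}\le\|(\mathbf H_f)_{SS}\|_{\mathrm{op}}+\|(\mathbf H_o)_{SS}\|_{\mathrm{op}}\;,
\end{align*}
where the second inequality is the triangle inequality for the operator norm. We take $g_S=\nabla_{\theta_S}\mathcal L$; since only a bound on the operator norm is used, the particular gradient direction plays no further role.

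Next, bound the two terms separately.
\begin{itemize}[leftmargin=*, itemsep=1pt, topsep=2pt]
\item By \Cref{prop:head-local-functional-curvature}, $(\mathbf H_f)_{SS}$ is block-diagonal across heads. The operator norm of a block-diagonal matrix is the maximum of the blocks' operator norms, so $\|(\mathbf H_f)_{SS}\|_{\mathrm{op}}=\max_{h\in S}\|(\mathbf H_f)_{hh}\|_{\mathrm{op}}$.
\item By \Cref{prop:outer-hessian-entk}, under the local $\beta$-smoothness assumption, $\|(\mathbf H_o)_{SS}\|_{\mathrm{op}}\le\beta\|K_S\|_{\mathrm{op}}$.
\end{itemize}
Adding the two bounds gives the claimed inequality.

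The only care needed is in interpreting the restricted blocks. Here $(\mathbf H_o)_{SS}=J_S^\top\nabla_F^2\ell\,J_S$ with $J_S=\partial F/\partial\theta_S$. Hence $\|(\mathbf H_o)_{SS}\|_{\mathrm{op}}\le\beta\|J_S\|_{\mathrm{op}}^2=\beta\|J_SJ_S^\top\|_{\mathrm{op}}=\beta\|K_S\|_{\mathrm{op}}$, which is consistent with \Cref{prop:outer-hessian-entk}.

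I expect no real obstacle: every step is a direct invocation of an earlier result.
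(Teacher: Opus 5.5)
Your proposal is correct and follows the paper's proof essentially step for step. The paper bounds the bilinear form by $\|\mathbf H_{SS}\|_{\mathrm{op}}$, splits it with the triangle inequality into $(\mathbf H_f)_{SS}$ and $(\mathbf H_o)_{SS}$, and then applies the block-diagonal result and the tangent-kernel bound, exactly as you do.
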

This result recovers the second-order terms used in \Cref{prop:gradnorm_flow,prop:gradnorm-forgetting}, explaining why attention-head tuning can enjoy a favorable second-order bound.
\section{Conclusion}
We study sparse and continual RL fine-tuning through gradient-based ranking and partitioning of attention heads. We show theoretically that gradient norm characterizes the training benefit of a head subset. Empirically, high-ranked heads support efficient sparse RL tuning, while partitioning heads across tasks effectively reduces forgetting in continual RL.

It is also important to acknowledge our limitations. (i) Our experiments are restricted to relatively small language models, limited tasks, and settings without long reasoning traces. (ii) Our analysis focuses on attention parameters and may not directly extend to MLP layers, which would be important for sparse full-model fine-tuning. (iii) The theoretical second-order analysis is restricted to a single multi-head self-attention layer.
\newpage
\subsection*{AI use statement}
In this work, we used generative AI tools for paper drafting and polishing, coding, literature review, and quickly testing mathematical intuitions. We have not used generative AI tools for generating synthetic datasets. LLM-generated code was tested for correctness. All final theoretical results are of our own.  All AI outputs were reviewed and edited by the authors.
We take responsibility for the final content of this work, including text, claims or artifacts produced with the aid of generative AI.
\bibliography{ref}

@misc{sagitova2026,
      title={Specialization of softmax attention heads: insights from the high-dimensional single-location model}, 
      author={M. Sagitova and O. Duranthon and L. Zdeborová},
      year={2026},
      eprint={2603.03993},
      archivePrefix={arXiv},
      primaryClass={cs.LG},
      url={https://arxiv.org/abs/2603.03993}, 
}

@inproceedings{
    ormaniec2024,
    title={What Does It Mean to Be a Transformer? Insights from a Theoretical Hessian Analysis},
    author={Weronika Ormaniec and Felix Dangel and Sidak Pal Singh},
    booktitle={The Thirteenth International Conference on Learning Representations},
    year={2025},
    url={https://openreview.net/forum?id=3ddi7Uss2A}
}

@inproceedings{Dan2019OptimalAO,
    title={Optimal Analysis of Subset-Selection Based L\_p Low Rank Approximation},
    author={Chen Dan and Hong Wang and Hongyang Zhang and Yuchen Zhou and Pradeep Kumar Ravikumar},
    booktitle={Neural Information Processing Systems},
    year={2019},
    url={https://api.semanticscholar.org/CorpusID:260537614}
}

@inproceedings{
    jin2025finetuning,
    title={Fine-Tuning Attention Modules Only: Enhancing Weight Disentanglement in Task Arithmetic},
    author={Ruochen Jin and Bojian Hou and Jiancong Xiao and Weijie J Su and Li Shen},
    booktitle={The Thirteenth International Conference on Learning Representations},
    year={2025},
    url={https://openreview.net/forum?id=dj0TktJcVI}
}

@article{Zhang2026IsOL,
    title={Is One Layer Enough? Training A Single Transformer Layer Can Match Full-Parameter RL Training},
    author={Zijian Zhang and Rizhen Hu and Athanasios Glentis and Dawei Li and Chung-Yiu Yau and Hongzhou Lin and Mingyi Hong},
    journal={ArXiv},
    year={2026},
    volume={abs/2607.01232},
    url={https://api.semanticscholar.org/CorpusID:289706488}
}

@article{Miahi2026UnderstandingAE,
    title={Understanding and Exploiting Weight Update Sparsity for Communication-Efficient Distributed RL},
    author={Erfan Miahi and Eugene Belilovsky},
    journal={ArXiv},
    year={2026},
    volume={abs/2602.03839},
    url={https://api.semanticscholar.org/CorpusID:285275603}
}

@online{fireworks2026frontier_rl,
    author       = {{Fireworks.ai}},
    title        = {Frontier RL is Cheaper Than You Think},
    howpublished = {Fireworks.ai Blog},
    year         = {2026},
    month        = {3},
    day          = {23},
    url          = {https://fireworks.ai/blog/frontier-rl-is-cheaper-than-you-think},
    note         = {Accessed: 2026-09-20}
}

@inproceedings{farajtabar2020orthogonal,
  author={Mehrdad Farajtabar and Navid Azizan and Alex Mott and Ang Li},
  title={Orthogonal Gradient Descent for Continual Learning},
  year={2020},
  cdate={1577836800000},
  pages={3762-3773},
  url={http://proceedings.mlr.press/v108/farajtabar20a.html},
  booktitle={AISTATS}
}

@inproceedings{chaudhry2020continual,
    author={Arslan Chaudhry and Naeemullah Khan and Puneet K. Dokania and Philip H. S. Torr},
    title={Continual Learning in Low-rank Orthogonal Subspaces},
    year={2020},
    cdate={1577836800000},
    url={https://proceedings.neurips.cc/paper/2020/hash/70d85f35a1fdc0ab701ff78779306407-Abstract.html},
    booktitle={NeurIPS}
}

@inproceedings{
    saha2021gradient,
    title={Gradient Projection Memory for Continual Learning},
    author={Gobinda Saha and Isha Garg and Kaushik Roy},
    booktitle={International Conference on Learning Representations},
    year={2021},
    url={https://openreview.net/forum?id=3AOj0RCNC2}
}

@inproceedings{doan2021theoretical,
    title={A Theoretical Analysis of Catastrophic Forgetting through the NTK Overlap Matrix},
    author={Thang Van Doan and Mehdi Abbana Bennani and Bogdan Mazoure and Guillaume Rabusseau and Pierre Alquier},
    booktitle={International Conference on Artificial Intelligence and Statistics},
    year={2020},
    url={https://api.semanticscholar.org/CorpusID:222208953}
}

@inproceedings{wang2023orthogonal,
    author={Xiao Wang and Tianze Chen and Qiming Ge and Han Xia and Rong Bao and Rui Zheng and Qi Zhang and Tao Gui and Xuanjing Huang},
    title={Orthogonal Subspace Learning for Language Model Continual Learning},
    year={2023},
    cdate={1672531200000},
    pages={10658-10671},
    url={https://doi.org/10.18653/v1/2023.findings-emnlp.715},
    booktitle={EMNLP (Findings)}
}

@article{liang2024inflora,
    title={InfLoRA: Interference-Free Low-Rank Adaptation for Continual Learning},
    author={Yan-Shuo Liang and Wu-Jun Li},
    journal={2024 IEEE/CVF Conference on Computer Vision and Pattern Recognition (CVPR)},
    year={2024},
    pages={23638-23647},
    url={https://api.semanticscholar.org/CorpusID:268819832}
}

@inproceedings{
    zhang2025lori,
    title={Lo{RI}: Reducing Cross-Task Interference in Multi-Task Low-Rank Adaptation},
    author={Juzheng Zhang and Jiacheng You and Ashwinee Panda and Tom Goldstein},
    booktitle={Second Conference on Language Modeling},
    year={2025},
    url={https://openreview.net/forum?id=b8cW86QcOD}
}

@inproceedings{
    qiu2026splitlora,
    title={SplitLo{RA}: Balancing Stability and Plasticity in Continual Learning Through Gradient Space Splitting},
    author={Haomiao Qiu and Miao Zhang and Ziyue Qiao and Weili Guan and Min Zhang and Liqiang Nie},
    booktitle={The Fourteenth International Conference on Learning Representations},
    year={2026},
    url={https://openreview.net/forum?id=Zm1hjXxRQV}
}

@inproceedings{
    iurada2025efficient,
    title={Efficient Model Editing with Task-Localized Sparse Fine-tuning},
    author={Leonardo Iurada and Marco Ciccone and Tatiana Tommasi},
    booktitle={The Thirteenth International Conference on Learning Representations},
    year={2025},
    url={https://openreview.net/forum?id=TDyE2iuvyc}
}

@inproceedings{
    nayak2026sculpting,
    title={Sculpting Subspaces: Constrained Full Fine-Tuning in {LLM}s for Continual Learning},
    author={Nikhil Shivakumar Nayak and Krishnateja Killamsetty and Ligong Han and Abhishek Bhandwaldar and Prateek Chanda and Kai Xu and Oleg Silkin and Mustafa Eyceoz and Hao Wang and Aldo Pareja and Akash Srivastava},
    booktitle={The Fourteenth International Conference on Learning Representations},
    year={2026},
    url={https://openreview.net/forum?id=vQcyqsGJDw}
}

@inproceedings{
    qiao2026merge,
    title={Merge before Forget: A Single Lo{RA} Continual Learning via Continual Merging},
    author={Fuli Qiao and Mehrdad Mahdavi},
    booktitle={The Fourteenth International Conference on Learning Representations},
    year={2026},
    url={https://openreview.net/forum?id=i1Rj7yU6eF}
}

@inproceedings{
    tang2025merging,
    title={Merging on the Fly Without Retraining: A Sequential Approach to Scalable Continual Model Merging},
    author={Anke Tang and Enneng Yang and Li Shen and Yong Luo and Han Hu and Lefei Zhang and Bo Du and Dacheng Tao},
    booktitle={The Thirty-ninth Annual Conference on Neural Information Processing Systems},
    year={2025},
    url={https://openreview.net/forum?id=rdGMyTPhui}
}

@article{yang2025continual,
    title={Continual Model Merging without Data: Dual Projections for Balancing Stability and Plasticity},
    author={Enneng Yang and An Quang Tang and Li Shen and Gui-Bing Guo and Xingwei Wang and Xiaochun Cao and Jie Zhang},
    journal={Advances in Neural Information Processing Systems 38},
    year={2025},
    url={https://api.semanticscholar.org/CorpusID:290960643}
}

@article{Lanzillotta2023TowardsGF,
    title={Towards guarantees for parameter isolation in continual learning},
    author={Giulia Lanzillotta and Sidak Pal Singh and Benjamin F. Grewe and Thomas Hofmann},
    journal={ArXiv},
    year={2023},
    volume={abs/2310.01165},
    url={https://api.semanticscholar.org/CorpusID:263605982}
}

@inproceedings{Malladi2023,
    author={Sadhika Malladi and Alexander Wettig and Dingli Yu and Danqi Chen and Sanjeev Arora},
    title={A Kernel-Based View of Language Model Fine-Tuning},
    year={2023},
    cdate={1672531200000},
    pages={23610-23641},
    url={https://proceedings.mlr.press/v202/malladi23a.html},
    booktitle={ICML}
}

@inproceedings{
    zheng2026understanding,
    title={Understanding the Dynamics of Forgetting and Generalization in Continual Learning via the Neural Tangent Kernel},
    author={Guodong Zheng and Peng Wang and Shengchao Hu and Quan Zheng and Li Shen},
    booktitle={The Fourteenth International Conference on Learning Representations},
    year={2026},
    url={https://openreview.net/forum?id=NE2yIxdo1w}
}

@inproceedings{
    luo2026sphere,
    title={{SPHERE}: Mitigating the Loss of Spectral Plasticity in Mixture-of-Experts for Deep Reinforcement Learning},
    author={Lirui Luo and Guoxi Zhang and Hongming Xu and Cong Fang and Qing Li},
    booktitle={Forty-third International Conference on Machine Learning},
    year={2026},
    url={https://openreview.net/forum?id=hXyv6xeHkO}
}

@inproceedings{
    shenfeld2026rls,
    title={{RL}'s Razor: Why Online Reinforcement Learning Forgets Less},
    author={Idan Shenfeld and Jyothish Pari and Pulkit Agrawal},
    booktitle={The Fourteenth International Conference on Learning Representations},
    year={2026},
    url={https://openreview.net/forum?id=7HNRYT4V44}
}

@inproceedings{
    chen2026retaining,
    title={Retaining by Doing: The Role of On-Policy Data in Mitigating Forgetting},
    author={Howard Chen and Noam Razin and Karthik R Narasimhan and Danqi Chen},
    booktitle={Forty-third International Conference on Machine Learning},
    year={2026},
    url={https://openreview.net/forum?id=ODTM64azGa}
}

@inproceedings{
    bair2026compressed,
    title={Compressed Sensing for Capability Localization in Large Language Models},
    author={Anna Bair and Yixuan Even Xu and Mingjie Sun and J Zico Kolter},
    booktitle={Forty-third International Conference on Machine Learning},
    year={2026},
    url={https://openreview.net/forum?id=X4MFtxw1sB}
}

@inproceedings{michel2019,
    author={Paul Michel and Omer Levy and Graham Neubig},
    title={Are Sixteen Heads Really Better than One?},
    year={2019},
    cdate={1546300800000},
    pages={14014-14024},
    url={http://papers.nips.cc/paper/9551-are-sixteen-heads-really-better-than-one},
    booktitle={NeurIPS}
}

@inproceedings{voita2019analyzing,
    title = "Analyzing Multi-Head Self-Attention: Specialized Heads Do the Heavy Lifting, the Rest Can Be Pruned",
    author = "Voita, Elena  and
      Talbot, David  and
      Moiseev, Fedor  and
      Sennrich, Rico  and
      Titov, Ivan",
    booktitle = "Proceedings of the 57th Annual Meeting of the Association for Computational Linguistics",
    month = jul,
    year = "2019",
    address = "Florence, Italy",
    publisher = "Association for Computational Linguistics",
    url = "https://aclanthology.org/P19-1580/",
    doi = "10.18653/v1/P19-1580",
    pages = "5797--5808",
}

@inproceedings{
    shi2026understanding,
    title={Understanding the Performance Gap in Preference Learning: A Dichotomy of {RLHF} and {DPO}},
    author={Ruizhe Shi and Minhak Song and Runlong Zhou and Zihan Zhang and Maryam Fazel and Simon Shaolei Du},
    booktitle={Forty-third International Conference on Machine Learning},
    year={2026},
    url={https://openreview.net/forum?id=sg94PRd3kD}
}

@article{Lin2025ContinualLV,
    title={Continual Learning via Sparse Memory Finetuning},
    author={Jessy Lin and Luke S. Zettlemoyer and Gargi Ghosh and Wen-tau Yih and Aram H. Markosyan and Vincent-Pierre Berges and Barlas Ouguz},
    journal={ArXiv},
    year={2025},
    volume={abs/2510.15103},
    url={https://api.semanticscholar.org/CorpusID:282203348}
}

@misc{miles2026,
    title        = {Miles: Enterprise-Grade Reinforcement Learning for Large-Scale Model Post-Training},
    author       = {{Miles Team}},
    year         = {2026},
    howpublished = {\url{https://github.com/radixark/miles}}
}

@misc{qwen3technicalreport,
    title={Qwen3 Technical Report}, 
    author={{Qwen Team}},
    year={2025},
    eprint={2505.09388},
    archivePrefix={arXiv},
    primaryClass={cs.CL},
    url={https://arxiv.org/abs/2505.09388}, 
}

@inproceedings{
    yin2025which,
    title={Which Attention Heads Matter for In-Context Learning?},
    author={Kayo Yin and Jacob Steinhardt},
    booktitle={Forty-second International Conference on Machine Learning},
    year={2025},
    url={https://openreview.net/forum?id=C7XmEByCFv}
}

@article{Olsson2022IncontextLA,
    title={In-context Learning and Induction Heads},
    author={Catherine Olsson and Nelson Elhage and Neel Nanda and Nicholas Joseph and Nova Dassarma and Thomas Henighan and Benjamin Mann and Amanda Askell and Yuntao Bai and Anna Chen and Tom Conerly and Dawn Drain and Deep Ganguli and Zac Hatfield-Dodds and Danny Hernandez and Scott Johnston and Andy Jones and John Kernion and Liane Lovitt and Kamal Ndousse and Dario Amodei and Tom B. Brown and Jack Clark and Jared Kaplan and Sam McCandlish and Chris Olah},
    journal={ArXiv},
    year={2022},
    volume={abs/2209.11895},
    url={https://api.semanticscholar.org/CorpusID:252532078}
}

@inproceedings{
    yao2025leveraging,
    title={Leveraging Sparsity for Sample-Efficient Preference Learning: A Theoretical Perspective},
    author={Yunzhen Yao and Lie He and Michael Gastpar},
    booktitle={Forty-second International Conference on Machine Learning},
    year={2025},
    url={https://openreview.net/forum?id=nxCWKWok8N}
}

@inproceedings{guo-etal-2021-parameter,
    title = "Parameter-Efficient Transfer Learning with Diff Pruning",
    author = "Guo, Demi  and
      Rush, Alexander  and
      Kim, Yoon",
    editor = "Zong, Chengqing  and
      Xia, Fei  and
      Li, Wenjie  and
      Navigli, Roberto",
    booktitle = "Proceedings of the 59th Annual Meeting of the Association for Computational Linguistics and the 11th International Joint Conference on Natural Language Processing (Volume 1: Long Papers)",
    month = aug,
    year = "2021",
    address = "Online",
    publisher = "Association for Computational Linguistics",
    url = "https://aclanthology.org/2021.acl-long.378/",
    doi = "10.18653/v1/2021.acl-long.378",
    pages = "4884--4896"
}

@article{Duchi2011AdaptiveSM,
    title={Adaptive Subgradient Methods for Online Learning and Stochastic Optimization},
    author={John C. Duchi and Elad Hazan and Yoram Singer},
    journal={J. Mach. Learn. Res.},
    year={2011},
    volume={12},
    pages={2121-2159},
    url={https://api.semanticscholar.org/CorpusID:538820}
}

@inproceedings{lin2026parameter,
    title = "Parameter Importance is Not Static: Evolving Parameter Isolation for Supervised Fine-Tuning",
    author = "Lin, Zekai  and
      Xue, Chao  and
      Liang, Di  and
      Han, Xingsheng  and
      Liu, Peiyang  and
      Wu, Xianjie  and
      Jiang, Lei  and
      Lu, Yu  and
      Simons, Bob  and
      Liang, Shuang  and
      Peng, Minlong",
    editor = "Liakata, Maria  and
      Moreira, Viviane P.  and
      Zhang, Jiajun  and
      Jurgens, David",
    booktitle = "Proceedings of the 64th Annual Meeting of the {A}ssociation for {C}omputational {L}inguistics (Volume 1: Long Papers)",
    month = jul,
    year = "2026",
    address = "San Diego, California, United States",
    publisher = "Association for Computational Linguistics",
    url = "https://aclanthology.org/2026.acl-long.1516/",
    doi = "10.18653/v1/2026.acl-long.1516",
    pages = "32846--32864",
    ISBN = "979-8-89176-390-6"
}
\bibliographystyle{misc/iclr2027_conference}

\newpage
\appendix
\section{Omitted Proofs}
\subsection{Omitted proofs in \Cref{sec:ranking-theory}}\label{sec:omitted-ranking-theory}
\begin{proof}[Proof of \Cref{cor:gradnorm-initial}]
By \Cref{prop:gradnorm_flow_eq},
\begin{align*}
    -\left.\frac{d}{dt}\mathcal L(\theta(t;S))\right|_{t=0}=\mathcal G(S)^2=\sum_{h\in S}\mathcal G(\{h\})^2\;.
\end{align*}
Hence selecting $k$ heads of largest gradient norm maximizes the initial loss decrease.
\end{proof}
\begin{proof}[Proof of \Cref{prop:gradnorm-ranking}]
Integrating \Cref{prop:gradnorm_flow_eq} gives
\begin{align*}
    \mathcal L(\theta_0)-\mathcal L(\theta(t;S))=\int_0^t\mathcal G(S;\theta(s;S))^2ds=t\,\overline{\mathcal G}(S;t)^2\;.
\end{align*}
Then for any $S$ with $\mathcal G(S)>0$ we have
\begin{align*}
    \mathcal L(\theta_0)-\mathcal L(\theta(t;S_k))=t\,\overline{\mathcal G}(S_k;t)^2\ge\frac{\overline{\mathcal G}(S_k;t)^2}{\overline{\mathcal G}(S;t)^2}\left[\mathcal L(\theta_0)-\mathcal L(\theta(t;S))\right]\;.
\end{align*}
Taking the maximum over $|S|\le k$ proves the claim.
\end{proof}
\begin{proof}[Proof of \Cref{prop:gradnorm_flow}]
Along the restricted gradient flow,
\begin{align*}
    \frac{d}{dt}\mathcal G(S;\theta(t;S))^2=-2(\nabla_{\theta_S}\mathcal L(\theta(t;S)))^\top\nabla_{\theta_S}^2\mathcal L(\theta(t;S))\nabla_{\theta_S}\mathcal L(\theta(t;S))\;.
\end{align*}
Hence, whenever $\mathcal G(S;\theta(t;S))>0$,
\begin{align*}
    -2\beta\mathcal G(S;\theta(t;S))^2\le\frac{d}{dt}\mathcal G(S;\theta(t;S))^2\le2\beta\mathcal G(S;\theta(t;S))^2\;.
\end{align*}
Integrating gives
\begin{align*}
    e^{-2\beta s}\mathcal G(S)^2\le\mathcal G(S;\theta(s;S))^2\le e^{2\beta s}\mathcal G(S)^2
\end{align*}
for every $s\in[0,t]$. Averaging over $s\in[0,t]$ yields
\[
    \frac{1-e^{-2\beta t}}{2\beta t}\mathcal G(S)^2\le\overline{\mathcal G}(S;t)^2\le\frac{e^{2\beta t}-1}{2\beta t}\mathcal G(S)^2\;.\qedhere
\]
\end{proof}
\begin{proof}[Proof of \Cref{prop:sparse-adam}]
The objective $\mathcal L_{\adam_\lambda}$ is strictly convex and separable across heads.
For head $h$, by the KKT condition, $\Delta_h=0$ is optimal iff
\begin{align*}
    0\in m_h+\lambda\partial\|\Delta_h\|_2\big|_{\Delta_h=0}=\{m_h+\lambda u:\|u\|_2\le1\}\;.
\end{align*}
This condition is equivalent to $\|m_h\|_2\le\lambda$. Strict convexity gives uniqueness.
\end{proof}
\begin{proof}[Proof of \Cref{prop:sparse-adam-optimal}]
By restricting $\Delta$ to $S$ and solving $\mathcal L_{\adam}(\Delta)$, we have $\Delta_{S}^\star=-\eta D_S^{-1}m$ and $\mathcal L_{\adam}(\Delta_{S}^\star)=-\frac{\eta m_S^\top D_{SS}^{-1}m_S}{2}$. Then we have that, for any $|S|\le k$,
\begin{align*}
    -\mathcal L_{\adam}(\Delta_{S_k^{(m)}}^\star)
    &\ge \frac{\eta\|m_{S_k^{(m)}}\|_2^2}{2\max_{i\in\theta_{S_k^{(m)}}}(\sqrt{v_i}+\eps_{\adam})}
    \ge \frac{\min_{i\in\theta_S}(\sqrt{v_i}+\eps_{\adam})}{\max_{i\in\theta_{S_k^{(m)}}}(\sqrt{v_i}+\eps_{\adam})}
    \left[-\mathcal L_{\adam}(\Delta_S^\star)\right]\;,
\end{align*}
where the second inequality uses $\|m_{S_k^{(m)}}\|_2^2\ge\|m_S\|_2^2$.
\end{proof}
\subsection{Omitted proofs in \Cref{sec:partition-theory}}\label{sec:omitted-partition-theory}
\begin{proof}[Proof of \Cref{cor:gradnorm-forgetting-average}]
By \Cref{prop:continual-gradient-flow},
\begin{align*}
    \left|\mathcal L_A(\theta(t))-\mathcal L_A(\theta_A)\right|\le\int_0^t\mathcal G_A(S_B;\theta(s))\mathcal G_B(S_B;\theta(s))\,ds=\widetilde{\mathcal G}_{A\mid B}(S_B;t)\int_0^t\mathcal G_B(S_B;\theta(s))\,ds\;.
\end{align*}
On the other hand,
\[
    \mathcal L_B(\theta_A)-\mathcal L_B(\theta(t))=\int_0^t\mathcal G_B(S_B;\theta(s))^2\,ds=\widetilde{\mathcal G}_B(S_B;t)\int_0^t\mathcal G_B(S_B;\theta(s))\,ds\;.\qedhere
\]
\end{proof}

\begin{proof}[Proof of \Cref{prop:gradnorm-forgetting}]
By Cauchy--Schwarz inequality,
\begin{align*}
    \widetilde{\mathcal G}_B(S_B;t)=\frac{\int_0^t\mathcal G_B(S_B;\theta(s))^2\,ds}{\int_0^t\mathcal G_B(S_B;\theta(s))\,ds}\ge\sqrt{\frac{1}{t}\int_0^t\mathcal G_B(S_B;\theta(s))^2\,ds}\;.
\end{align*}
Then \Cref{prop:gradnorm_flow} gives
\begin{align*}
    \mathcal G_B(S_B;\theta(s))\ge\sqrt{\frac{1-e^{-2\beta_Bt}}{2\beta_Bt}}\,\mathcal G_B(S_B;\theta_A)\;.
\end{align*}
Note that
\begin{align*}
    \frac{d}{dt}\mathcal G_A(S_B;\theta(t))=-\frac{\nabla_{\theta_{S_B}}\mathcal L_A(\theta(t))^\top\nabla_{\theta_{S_B}}^2\mathcal L_A(\theta(t))\nabla_{\theta_{S_B}}\mathcal L_B(\theta(t))}{\|\nabla_{\theta_{S_B}}\mathcal L_A(\theta(t))\|_2}\le\beta_A\mathcal G_B(S_B;\theta(t))\;.
\end{align*}
Hence, for every $s\in[0,t]$,
\begin{align*}
    \mathcal G_A(S_B;\theta(s))\le\mathcal G_A(S_B;\theta_A)+\beta_A\int_0^s\mathcal G_B(S_B;\theta(u))\,du\;.
\end{align*}
Then we have
\begin{align*}
    \left|\mathcal L_A(\theta(t))-\mathcal L_A(\theta_A)\right|&\le\int_0^t\mathcal G_A(S_B;\theta(s))\mathcal G_B(S_B;\theta(s))\,ds\\
    &\le \int_0^t\left(\mathcal G_A(S_B;\theta_A)+\beta_A\int_0^s\mathcal G_B(S_B;\theta(u))\,du\right)\mathcal G_B(S_B;\theta(s))\,ds\\
    &=\mathcal G_A(S_B;\theta_A)\int_0^t\mathcal G_B(S_B;\theta(s))\,ds+\frac{\beta_A}{2}\left(\int_0^t\mathcal G_B(S_B;\theta(s))\,ds\right)^2\;.
\end{align*}
Using the fact that $\int_0^t\mathcal G_B(S_B;\theta(s))\,ds=\frac{\mathcal L_B(\theta_A)-\mathcal L_B(\theta(t))}{\widetilde{\mathcal G}_B(S_B;t)}$ gives the desired bound.
\end{proof}
\subsection{Omitted proofs in \Cref{sec:attention-curvature}}
\label{sec:proof-of-attention-curvature}

For an attention row $A_{i,:}$, define its attention-weighted mean and centered moments by
\begin{align*}
\mu_i:=\sum_jA_{ij}X_{j,:},\quad
M_{2,i}:=\sum_jA_{ij}(X_{j,:}-\mu_i)(X_{j,:}-\mu_i)^\top,\quad
M_{3,i}:=\sum_jA_{ij}(X_{j,:}-\mu_i)^{\otimes3}\;,
\end{align*}
where $x^{\otimes 3}$ denotes the third-order tensor product.

\begin{lemma}
\label{lem:attention-concentration-moments}
Suppose $\|X_{j,:}\|_2\le B$ for all $j$. If there exists $T_i$ s.t.
\begin{align*}
\sum_{j\in T_i}A_{ij}\ge1-\varepsilon,\qquad
\max_{j,j'\in T_i}\|X_{j,:}-X_{j',:}\|_2\le\rho\;,
\end{align*}
then
\begin{align*}
\|M_{2,i}\|_{\mathrm{op}}=O_B(\rho^2+\varepsilon),\qquad
\|M_{3,i}\|=O_B(\rho^3+\varepsilon)\;.
\end{align*}
\end{lemma}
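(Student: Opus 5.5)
The plan is to use the variational characterization of centered moments. For any fixed reference point $c\in\mathbb R^d$, the weighted second moment about the mean satisfies
\begin{align*}
M_{2,i}=\sum_jA_{ij}(X_{j,:}-c)(X_{j,:}-c)^\top-(\mu_i-c)(\mu_i-c)^\top\preceq\sum_jA_{ij}(X_{j,:}-c)(X_{j,:}-c)^\top\;.
\end{align*}
Since $M_{2,i}\succeq0$, its operator norm is at most $\sum_jA_{ij}\|X_{j,:}-c\|_2^2$. I will choose $c:=X_{j_0,:}$ for some fixed $j_0\in T_i$; the set $T_i$ is nonempty whenever $\varepsilon<1$, and otherwise the bound is trivial from boundedness. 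Then I split the sum over $j\in T_i$ and $j\notin T_i$. On $T_i$ each term is at most $\rho^2$ and the weights sum to at most $1$. Off $T_i$ each term is at most $(2B)^2$ and the weights sum to at most $\varepsilon$. This gives $\|M_{2,i}\|_{\mathrm{op}}\le\rho^2+4B^2\varepsilon$.

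For the third moment there is no such clean variational identity, so I will bound it directly. Write $Y_j:=X_{j,:}-\mu_i$ and use $\|M_{3,i}\|\le\sum_jA_{ij}\|Y_j\|_2^3$. Here $\|\cdot\|$ is taken as the injective (spectral) tensor norm, or the Frobenius norm; both are dominated by $\|Y_j\|_2^3$ for a rank-one tensor $Y_j^{\otimes3}$. The key intermediate step is to locate $\mu_i$ near the cluster. Let $c=X_{j_0,:}$ as before. Then
\begin{align*}
\|\mu_i-c\|_2\le\sum_jA_{ij}\|X_{j,:}-c\|_2\le\rho+2B\varepsilon\;.
\end{align*}
Hence, for $j\in T_i$, $\|Y_j\|_2\le2\rho+2B\varepsilon$. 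For $j\notin T_i$, $\|Y_j\|_2\le2B$ because $\|\mu_i\|_2\le B$.

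Summing, $\|M_{3,i}\|\le(2\rho+2B\varepsilon)^3+8B^3\varepsilon$. Expanding the cube gives terms of order $\rho^3$, $\rho^2\varepsilon$, $\rho\varepsilon^2$ and $\varepsilon^3$. Since $\rho\le2B$ and $\varepsilon\le1$, every term other than $\rho^3$ is $O_B(\varepsilon)$, which yields $O_B(\rho^3+\varepsilon)$.

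The main obstacle is this third-moment step, where the mean shift couples $\rho$ and $\varepsilon$. The fix is to absorb the cross terms into $O_B(\varepsilon)$ using the a priori bounds $\rho\le 2B$ and $\varepsilon\le 1$. The same absorption could also be applied to the second moment if one prefers to bound it directly around $\mu_i$ rather than through the variational inequality.
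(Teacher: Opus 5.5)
Your proof is correct and follows essentially the paper's argument: anchor at $X_{j_0,:}$ for some $j_0\in T_i$, bound the mean shift by $\|\mu_i-X_{j_0,:}\|_2\le\rho+2B\varepsilon$, split the weighted sum over $T_i$ and its complement, and control the moment norms by $\sum_jA_{ij}\|X_{j,:}-\mu_i\|_2^k$, which is exactly how the paper gets $(2\rho+2B\varepsilon)^3+8B^3\varepsilon$ for $M_{3,i}$. The only deviation is for $M_{2,i}$, where your identity $M_{2,i}=\sum_jA_{ij}(X_{j,:}-c)(X_{j,:}-c)^\top-(\mu_i-c)(\mu_i-c)^\top$ with $c=X_{j_0,:}$ avoids the mean shift and gives the slightly sharper $\rho^2+4B^2\varepsilon$ instead of the paper's $(2\rho+2B\varepsilon)^2+4B^2\varepsilon$; you also make explicit two steps the paper leaves implicit, namely absorbing the cross terms into $O_B(\varepsilon)$ (legitimate because replacing $\rho$ by $\min(\rho,2B)$ preserves the hypothesis) and handling the case $\varepsilon\ge1$.
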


\begin{proof}
Fix any $j_0\in T_i$. Since $\sum_jA_{ij}=1$,
\begin{align*}
\|\mu_i-X_{j_0,:}\|_2
&\le\sum_jA_{ij}\|X_{j,:}-X_{j_0,:}\|_2\\
&\le \rho+2B\varepsilon\;.
\end{align*}
Hence, for $j\in T_i$,
\begin{align*}
\|X_{j,:}-\mu_i\|_2\le 2\rho+2B\varepsilon\;,
\end{align*}
while for $j\notin T_i$,
\begin{align*}
\|X_{j,:}-\mu_i\|_2\le 2B\;.
\end{align*}
Therefore,
\begin{align*}
\|M_{2,i}\|_{\mathrm{op}}
&\le\sum_jA_{ij}\|X_{j,:}-\mu_i\|_2^2\\
&\le (2\rho+2B\varepsilon)^2+4B^2\varepsilon
=O_B(\rho^2+\varepsilon)\;,
\end{align*}
and similarly,
\begin{align*}
\|M_{3,i}\|
&\le\sum_jA_{ij}\|X_{j,:}-\mu_i\|_2^3\\
&\le (2\rho+2B\varepsilon)^3+8B^3\varepsilon
=O_B(\rho^3+\varepsilon)\;.
\end{align*}
\end{proof}

\begin{lemma}
\label{lem:attention-functional-hessian-moments}
Consider a single self-attention head
\begin{align*}
F_h=A_hXW_h^V,\qquad
A_h=\operatorname{softmax}\left(\frac{XW_h^Q(W_h^K)^\top X^\top}{\sqrt{d_k}}\right)\;,
\end{align*}
with parameter block $\theta_h=(W_h^Q,W_h^K,W_h^V)$. The second-derivative formulas of~\citet{ormaniec2024} express every nonzero block of the corresponding functional Hessian through the second and third centered attention moments. In particular, under bounded input and weight norms, there exists a constant $C$ depending only on these bounds and the dimensions s.t.
\begin{align*}
\|(\mathbf H_f)_{hh}\|_{\mathrm{op}}
\le
C\|\nabla_F\ell(F)\|_2
\left(
\max_i\|M_{2,i}\|_{\mathrm{op}}
+
\max_i\|M_{3,i}\|
\right)\;.
\end{align*}
\end{lemma}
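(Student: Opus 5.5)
We will prove the bound by computing the second derivatives of $F_h=A_hXW_h^V$ block by block with respect to $(W_h^Q,W_h^K,W_h^V)$, and checking that each derivative of the softmax factors through the centered moments $M_{2,i},M_{3,i}$. Since $\mathbf H_f=\frac{\partial\ell}{\partial F}\nabla^2_\theta F$, we have
\begin{align*}
\|(\mathbf H_f)_{hh}\|_{\mathrm{op}}\le\|\nabla_F\ell(F)\|_2\,\sup_{\|u\|_2=1}\|D^2F_h[u,u]\|_2 .
\end{align*}
So it suffices to bound the second directional derivative of $F_h$ along a unit direction $u=(U^Q,U^K,U^V)$ by $C(\max_i\|M_{2,i}\|_{\mathrm{op}}+\max_i\|M_{3,i}\|)$. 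Polarization then handles the mixed blocks, with $C$ depending only on $B$, the weight bounds, $d$, $d_k$ and $L$.

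\textbf{Softmax derivatives as cumulants.} For row $i$, set the logits $s_{ij}=X_{i,:}W^Q(W^K)^\top X_{j,:}^\top/\sqrt{d_k}$. The standard identities are $\partial A_{ij}=A_{ij}(\partial s_{ij}-\sum_kA_{ik}\partial s_{ik})$, and the second derivative of $\sum_jA_{ij}g_j$ in $s$ is the third joint cumulant under the distribution $A_{i,:}$. With $\delta s_{ij}$ depending on $j$ only through $X_{j,:}$, we get:
\begin{itemize}
\item the first-order term $\sum_jA_{ij}(\delta s_{ij}-\overline{\delta s})X_{j,:}$ is a covariance of $\delta s$ and $X_{j,:}$;
\item the second-order softmax term is a third cumulant.
\end{itemize}

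\textbf{Key structural point.} In direction $U^K$, $\delta s_{ij}=q_i^\top (U^K)^\top X_{j,:}^\top$ is linear in $X_{j,:}$. In direction $U^Q$, $\delta s_{ij}=X_{i,:}U^Q(W^K)^\top X_{j,:}^\top$ is also linear in $X_{j,:}$, with the $i$-dependence factored out. Hence:
\begin{itemize}
\item the covariance equals a bounded linear map applied to $M_{2,i}$;
\item the third cumulant, and the genuine second derivative of the logits (the $U^QU^K$ cross term, again linear in $X_{j,:}$, entering via a covariance), are bounded by $\|M_{3,i}\|$ and $\|M_{2,i}\|$ respectively.
\end{itemize}

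\textbf{Block-by-block accounting.} The blocks then fall out as follows.
\begin{itemize}
\item The $VV$ block vanishes, since $F_h$ is linear in $W^V$.
\item The $QV$ and $KV$ blocks equal $\partial A\,XU^V$, i.e. a covariance of a linear function with $X_{j,:}$, so they are $O(\|M_{2,i}\|)$.
\item The $QQ$, $KK$, $QK$ blocks equal (third cumulant of logit perturbations and $X_{j,:}W^V$), which is $O(\|M_{3,i}\|)$, plus (covariance of second logit perturbation with $X_{j,:}W^V$), which is $O(\|M_{2,i}\|)$.
\end{itemize}
Summing over rows $i$ and taking operator norms gives the bound, with a factor like $\sqrt L$ absorbed into $C$. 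One can cite the explicit formulas of \citet{ormaniec2024} to avoid recomputing them, but the cumulant bookkeeping above is what we verify.

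\textbf{Main obstacle.} The hardest step is the first-order term in the $QQ$/$QK$/$KK$ blocks. A naive second derivative of $\sum_jA_{ij}X_{j,:}W^V$ produces products of two first derivatives of $A$, and these must be shown to combine into a centered third moment rather than uncentered ones. We will handle this by writing everything with centered variables $X_{j,:}-\mu_i$ from the start: translation invariance of the softmax in $s$ lets us subtract $\mu_i$ inside each linear logit perturbation, and $\sum_jA_{ij}(X_{j,:}-\mu_i)=0$ kills the lower-order terms. The remaining pieces are then exactly contractions of $M_{2,i}$ and $M_{3,i}$ with bounded tensors.
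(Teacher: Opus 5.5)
Your proposal is correct, but it takes a different route from the paper.

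\textbf{The paper's route.} The paper's proof does no computation of its own. It splits $(\mathbf H_f)_{hh}$ into the $3\times3$ block matrix over $(W_h^Q,W_h^K,W_h^V)$ and appeals to Theorem~3.2 and Remark~3.1 of \citet{ormaniec2024} for the fact that every nonzero block is linear in $M_2$, $M_3$, or both. The norm bounds then give the constant.

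\textbf{Your route.} You derive this structure directly. You bound the symmetric block by its quadratic form along a single direction $u=(U^Q,U^K,U^V)$. You then combine the exact identity $\frac{d^2}{dt^2}\sum_jA_{ij}g_j=\kappa_3(\dot s,\dot s,g)+\mathrm{Cov}(\ddot s,g)$ (cumulants under $A_{i,:}$) with the observation that $\dot s_{ij}=a_i^\top X_{j,:}^\top$ and $\ddot s_{ij}=b_i^\top X_{j,:}^\top$ are linear in the key token. For unit $u$, $\|a_i\|_2$ and $\|b_i\|_2$ are controlled by the input and weight bounds. Row by row this gives
\[
\frac{d^2}{dt^2}(F_h)_{i,:}^\top=2(U^V)^\top M_{2,i}a_i+(W^V)^\top\bigl(M_{3,i}[a_i,a_i,\cdot\,]+M_{2,i}b_i\bigr)\;,
\]
and the lemma follows after Cauchy--Schwarz over rows, with the $\sqrt L$ absorbed into $C$.

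\textbf{Comparison.} Your route buys self-containment and an explicit constant. It also gives a sharper picture than the citation states: the $QQ$ and $KK$ blocks involve only $M_3$, the $QV$ and $KV$ blocks only $M_2$, and only $QK$ mixes both. In particular, the $\mathrm{Cov}(\ddot s,\cdot)$ term you list for $QQ$ and $KK$ is identically zero, since the logits are linear in each of $W^Q$ and $W^K$ separately. The paper's route is shorter but leaves all verification to the external formulas.

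\textbf{Two minor remarks.}
\begin{enumerate}
\item Because $(\mathbf H_f)_{hh}$ is symmetric, its operator norm already equals $\sup_{\|u\|_2=1}|u^\top(\mathbf H_f)_{hh}u|$, so polarization is unnecessary.
\item The ``main obstacle'' you flag is already settled by the cumulant identity you state, which is exact at $t=0$ and automatically centered. No separate translation-invariance argument is needed.
\end{enumerate}
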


\begin{proof}
For each head $h$, we further decompose its parameter block as
\begin{align*}
\theta_h=\left(W_h^Q,W_h^K,W_h^V\right)\;.
\end{align*}
Accordingly, the within-head functional-Hessian block can be written as
\begin{align*}
(\mathbf H_f)_{hh}
=
\begin{pmatrix}
(\mathbf H_f)_{W_h^QW_h^Q} & (\mathbf H_f)_{W_h^QW_h^K} & (\mathbf H_f)_{W_h^QW_h^V}\\
(\mathbf H_f)_{W_h^KW_h^Q} & (\mathbf H_f)_{W_h^KW_h^K} & (\mathbf H_f)_{W_h^KW_h^V}\\
(\mathbf H_f)_{W_h^VW_h^Q} & (\mathbf H_f)_{W_h^VW_h^K} & (\mathbf H_f)_{W_h^VW_h^V}
\end{pmatrix}\;.
\end{align*}
By Theorem~3.2 and Remark~3.1 of~\citet{ormaniec2024}, each nonzero $(W_h^Q,W_h^K,W_h^V)$ block above can be written as a linear function of $M_2$, $M_3$, or both. Under bounded input and weight norms, this implies
\[
\|(\mathbf H_f)_{hh}\|_{\mathrm{op}}
\le
C\|\nabla_F\ell(F)\|_2
\left(
\max_i\|M_{2,i}\|_{\mathrm{op}}
+
\max_i\|M_{3,i}\|
\right)\;.\qedhere
\]
\end{proof}
\begin{proof}[Proof of \Cref{prop:head-local-functional-curvature}]
The output of a multi-head self-attention layer is the sum of head outputs, each depending only on its own parameter block. Hence, for $h\neq h'$,
\begin{align*}
\nabla_{\theta_h,\theta_{h'}}^2F=0\;.
\end{align*}
The definition of the functional Hessian therefore gives $(\mathbf H_f)_{hh'}=0$. Thus $(\mathbf H_f)_{SS}$ is block diagonal, whose operator norm is the maximum operator norm of its diagonal blocks.
\end{proof}
\begin{proof}[Proof of \Cref{prop:concentrated-attention-curvature}]
By \Cref{lem:attention-concentration-moments},
\begin{align*}
\max_i\|M_{2,i}\|_{\mathrm{op}}=O(\rho^2+\varepsilon),\qquad
\max_i\|M_{3,i}\|=O(\rho^3+\varepsilon)\;.
\end{align*}
Applying \Cref{lem:attention-functional-hessian-moments} and using $\|\nabla_F\ell(F)\|_2\le G$ yields
\[
\|(\mathbf H_f)_{hh}\|_{\mathrm{op}}
=
O\left(G\left(\rho^2+\rho^3+\varepsilon\right)\right)\;.\qedhere
\]
\end{proof}
\begin{proof}[Proof of \Cref{prop:outer-hessian-entk}]
We have
\[
\|(\mathbf H_o)_{SS}\|_{\mathrm{op}}
=\left\|\left(\frac{\partial F}{\partial\theta_S}\right)^\top\nabla_F^2\ell(F)\left(\frac{\partial F}{\partial\theta_S}\right)\right\|_{\mathrm{op}}\le \left\|\frac{\partial F}{\partial\theta_S}\right\|_{\mathrm{op}}^2\|\nabla_F^2\ell(F)\|_{\mathrm{op}}\le \beta\|K_S\|_{\mathrm{op}}\;.\qedhere
\]
\end{proof}
\begin{proof}[Proof of \Cref{cor:attention-second-order}]
Note that
\begin{align*}
    \left|(\nabla_{\theta_S}\mathcal L)^\top\mathbf H_{SS}v_S\right|\le \|\mathbf H_{SS}\|_{\mathrm{op}}\|\nabla_{\theta_S}\mathcal L\|_2\|v\|_2\le \left(\|\mathbf (\mathbf H_f)_{SS}\|_{\mathrm{op}}+\|(\mathbf H_o)_{SS}\|_{\mathrm{op}}\right)\|\nabla_{\theta_S}\mathcal L\|_2\|v\|_2\;.
\end{align*}
Combining this with \Cref{prop:head-local-functional-curvature,prop:outer-hessian-entk} gives the desired bound.
\end{proof}
\newpage
\section{Supplemental Experimental Results}
\subsection{Supplemental training curves}\label{sec:supp_curves}
\begin{figure}[htbp]
\centering
\includegraphics[width=0.98\textwidth]{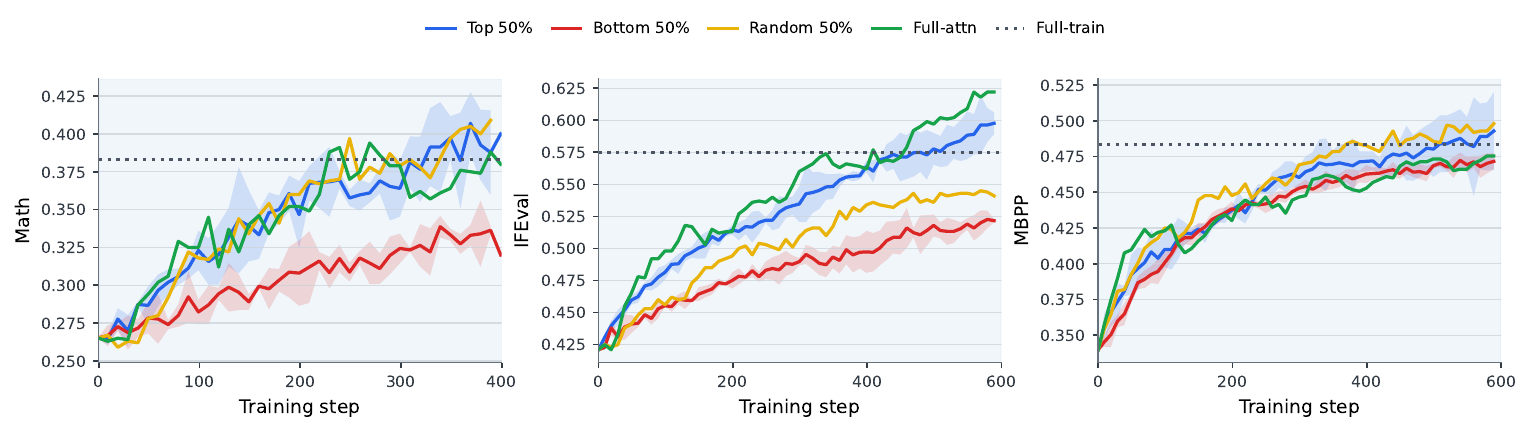}
\caption{Sparse RL training accuracy curves of \lm{Qwen3-1.7B}. Methods include top 50\%, bottom 50\%, and random 50\% attention-head tuning, full-attention tuning, and full tuning. Full tuning is shown as a final-score horizontal line.}
\label{fig:sparse-curves-qwen317b}
\end{figure}

\begin{figure}[hbtp]
\centering
\includegraphics[width=0.98\textwidth]{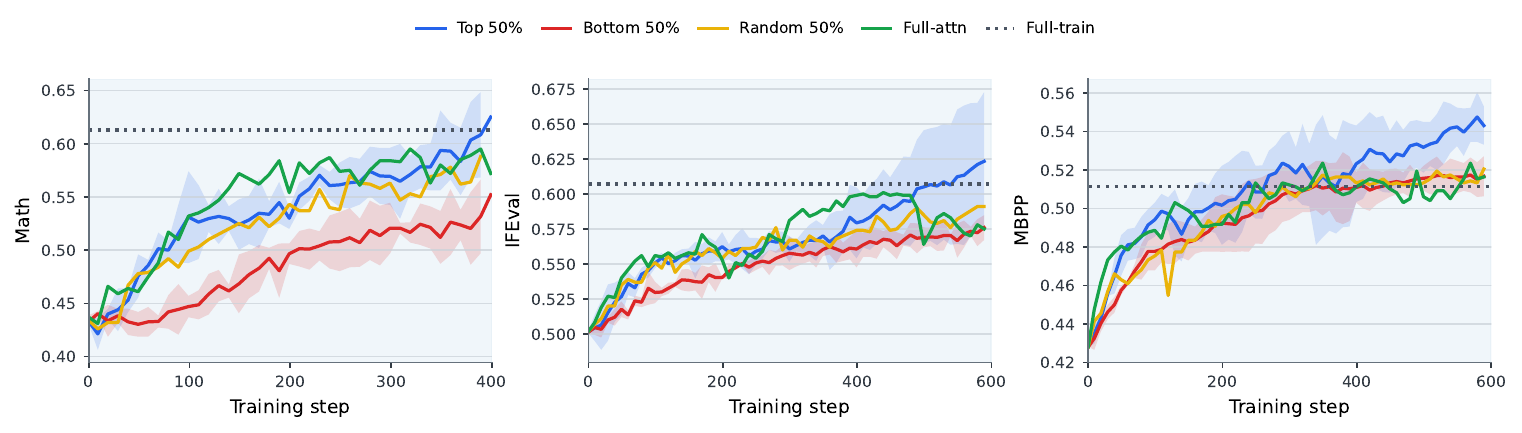}
\caption{Sparse RL training accuracy curves of \lm{Qwen3-4B}. Methods include top 50\%, bottom 50\%, and random 50\% attention-head tuning, full-attention tuning, and full tuning. Full tuning is shown as a final-score horizontal line.}
\label{fig:sparse-curves-qwen34b}
\end{figure}

\begin{figure}[htbp]
\centering
\includegraphics[width=0.49\textwidth]{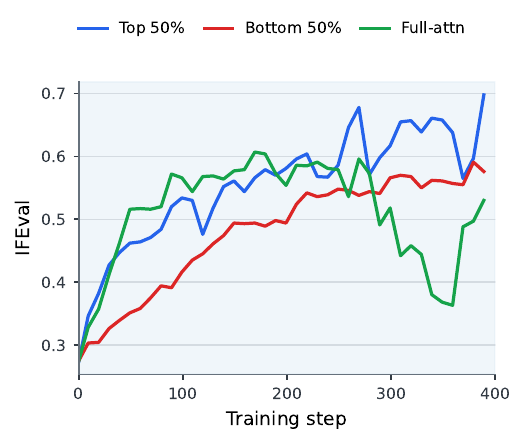}
\caption{Sparse RL training accuracy curves of \lm{Mistral-7B}. Methods include top 50\% and bottom 50\% attention-head tuning, and full-attention tuning. }
\label{fig:sparse-curves-mistral7b}
\end{figure}

\subsection{Supplemental numerical results}\label{sec:supp_tables}
\begin{table}[H]
\caption{Continual RL accuracy (\%) of \lm{Qwen3-1.7B} (Math $\rightarrow$ Coding). The model is first trained on Math task and then on Coding task. Parenthesized arrows are performance changes from the previous stage. The highest performance across all methods is highlighted in pink. The performance before training on the corresponding task is highlighted in gray.}
\label{tab:continual-qwen317bmathmbpp}
\centering
\small
\setlength{\tabcolsep}{5pt}
\begin{tabular}{lllcc}
\toprule
Method & Train stage & Eval split & Math metric & Coding metric \\
\midrule
Top 50\% $\rightarrow$ bottom 50\% & $\mathrm{I}$ & In-dist test & \cellcolor[RGB]{255,225,235} 37.5 & \cellcolor[gray]{0.88} 56.6  \\
 &  & Out-dist test & \cellcolor[RGB]{255,225,235} 75.6 & \cellcolor[gray]{0.88} 37.0  \\
\addlinespace[1pt]
 & $\mathrm{II}$ & In-dist test & \cellcolor[RGB]{255,225,235} 35.8 ($\downarrow 1.7$) &  56.1  \\
 &  & Out-dist test & \cellcolor[RGB]{255,225,235} 75.2 ($\downarrow 0.4$) & \cellcolor[RGB]{255,225,235} 37.7  \\
\addlinespace[1pt]
Full-attn continual & $\mathrm{I}$ & In-dist test & 33.1 & \cellcolor[gray]{0.88} 54.8  \\
 &  & Out-dist test & 74.2 & \cellcolor[gray]{0.88} 35.2  \\
\addlinespace[1pt]
 & $\mathrm{II}$ & In-dist test & 30.0 ($\downarrow 3.1$) & \cellcolor[RGB]{255,225,235} 60.1  \\
 &  & Out-dist test & 73.0 ($\downarrow 1.2$) & 36.9  \\
\addlinespace[1pt]
Full continual & $\mathrm{I}$ & In-dist test & 35.3 & \cellcolor[gray]{0.88} 55.0  \\
 &  & Out-dist test & 73.6 & \cellcolor[gray]{0.88} 35.0  \\
\addlinespace[1pt]
 & $\mathrm{II}$ & In-dist test & 34.5 ($\downarrow 0.8$) & 58.5  \\
 &  & Out-dist test & 73.8 ($\uparrow 0.2$) & 33.6  \\
\bottomrule
\end{tabular}
\end{table}

\begin{table}[H]
\caption{Continual RL accuracy (\%) of \lm{Qwen3-1.7B} (IF $\rightarrow$ Math). The model is first trained on IF task and then on Math task. Parenthesized arrows are performance changes from the previous stage. The highest performance across all methods is highlighted in pink. The performance before training on the corresponding task is highlighted in gray.}
\label{tab:continual-qwen317bifevalmath}
\centering
\small
\setlength{\tabcolsep}{5pt}
\begin{tabular}{lllcc}
\toprule
Method & Train stage & Eval split & IF metric & Math metric \\
\midrule
Top 50\% $\rightarrow$ bottom 50\% & $\mathrm{I}$ & In-dist test & 58.0 & \cellcolor[gray]{0.88} 24.6  \\
 &  & Out-dist test & \cellcolor[RGB]{255,225,235} 20.3 & \cellcolor[gray]{0.88} 70.6  \\
\addlinespace[1pt]
 & $\mathrm{II}$ & In-dist test & 57.7 ($\downarrow 0.3$) & 30.5  \\
 &  & Out-dist test & 20.7 ($\uparrow 0.3$) & 73.8  \\
\addlinespace[1pt]
Round robin 50\%+50\% & $\mathrm{I}$ & In-dist test & 52.5 & \cellcolor[gray]{0.88} 27.2  \\
 &  & Out-dist test & 19.0 & \cellcolor[gray]{0.88} 70.0  \\
\addlinespace[1pt]
 & $\mathrm{II}$ & In-dist test & 52.2 ($\downarrow 0.3$) & 33.9  \\
 &  & Out-dist test & 18.7 ($\downarrow 0.3$) & 74.0  \\
\addlinespace[1pt]
Full-attn continual & $\mathrm{I}$ & In-dist test & \cellcolor[RGB]{255,225,235} 59.4 & \cellcolor[gray]{0.88} 24.0  \\
 &  & Out-dist test & 20.0 & \cellcolor[gray]{0.88} 65.0  \\
\addlinespace[1pt]
 & $\mathrm{II}$ & In-dist test & \cellcolor[RGB]{255,225,235} 59.3 ($\downarrow 0.1$) & 35.7  \\
 &  & Out-dist test & 19.3 ($\downarrow 0.7$) & \cellcolor[RGB]{255,225,235} 78.2  \\
\addlinespace[1pt]
Full continual & $\mathrm{I}$ & In-dist test & 54.6 & \cellcolor[gray]{0.88} 25.7  \\
 &  & Out-dist test & 19.7 & \cellcolor[gray]{0.88} 65.2  \\
\addlinespace[1pt]
 & $\mathrm{II}$ & In-dist test & 54.4 ($\downarrow 0.2$) & \cellcolor[RGB]{255,225,235} 36.1  \\
 &  & Out-dist test & \cellcolor[RGB]{255,225,235} 21.7 ($\uparrow 2.0$) & 76.8  \\
\bottomrule
\end{tabular}
\end{table}

\begin{table}[H]
\caption{Continual RL accuracy (\%) of \lm{Qwen3-4B} (Coding $\rightarrow$ IF). The model is first trained on Coding task and then on IF task. Parenthesized arrows are performance changes from the previous stage. The highest performance across all methods is highlighted in pink. The performance before training on the corresponding task is highlighted in gray.}
\label{tab:continual-qwen34bmbppifeval}
\centering
\small
\setlength{\tabcolsep}{5pt}
\begin{tabular}{lllcc}
\toprule
Method & Train stage & Eval split & Coding metric & IF metric \\
\midrule
Top 50\% $\rightarrow$ bottom 50\% & $\mathrm{I}$ & In-dist test &\cellcolor[RGB]{255,225,235} 69.0 & \cellcolor[gray]{0.88} 49.9 \\
 &  & Out-dist test & \cellcolor[RGB]{255,225,235} 51.4 & \cellcolor[gray]{0.88} 24.3  \\
\addlinespace[1pt]
 & $\mathrm{II}$ & In-dist test & \cellcolor[RGB]{255,225,235} 68.3 ($\downarrow 0.7$) & 53.1  \\
 &  & Out-dist test & \cellcolor[RGB]{255,225,235} 52.6 ($\uparrow 1.2$) & 23.0  \\
\addlinespace[1pt]
Round robin 50\%+50\% & $\mathrm{I}$ & In-dist test & 68.0 & \cellcolor[gray]{0.88} 49.8 \\
 &  & Out-dist test & 42.9 & \cellcolor[gray]{0.88} 25.7  \\
\addlinespace[1pt]
 & $\mathrm{II}$ & In-dist test & 64.3 ($\downarrow 3.7$) & 55.5  \\
 &  & Out-dist test & 34.6 ($\downarrow 8.3$) & \cellcolor[RGB]{255,225,235} 25.0  \\
\addlinespace[1pt]
Full-attn continual & $\mathrm{I}$ & In-dist test & 65.1 & \cellcolor[gray]{0.88} 51.8 \\
 &  & Out-dist test & 47.9 & \cellcolor[gray]{0.88} 22.0  \\
\addlinespace[1pt]
 & $\mathrm{II}$ & In-dist test & 54.8 ($\downarrow 10.3$) & \cellcolor[RGB]{255,225,235} 62.2  \\
 &  & Out-dist test & 34.1 ($\downarrow 13.8$) & 22.3  \\
\addlinespace[1pt]
Full continual & $\mathrm{I}$ & In-dist test & 66.4 & \cellcolor[gray]{0.88} 52.1 \\
 &  & Out-dist test & 43.6 & \cellcolor[gray]{0.88} 22.7  \\
\addlinespace[1pt]
 & $\mathrm{II}$ & In-dist test & 65.9 ($\downarrow 0.5$) & 56.0  \\
 &  & Out-dist test & 41.7 ($\downarrow 1.9$) & \cellcolor[RGB]{255,225,235} 25.0  \\
\bottomrule
\end{tabular}
\end{table}

\begin{table}[H]
\caption{
Continual RL accuracy (\%) of \lm{Qwen3-4B} (Math $\rightarrow$ IF). The model is first trained on Math task and then on IF task. Parenthesized arrows are performance changes from the previous stage. The highest performance across all methods is highlighted in pink. The performance before training on the corresponding task is highlighted in gray. }
\label{tab:continual-qwen34bmathifeval}
\centering
\small
\setlength{\tabcolsep}{5pt}
\begin{tabular}{lllcc}
\toprule
Method & Train stage & Eval split & Math metric & IF metric \\
\midrule
Top 50\% $\rightarrow$ bottom 50\% & $\mathrm{I}$ & In-dist test & \cellcolor[RGB]{255,225,235} 59.3 & \cellcolor[gray]{0.88} 49.6  \\
 &  & Out-dist test & \cellcolor[RGB]{255,225,235} 88.4 & \cellcolor[gray]{0.88} 22.7  \\
\addlinespace[1pt]
 & $\mathrm{II}$ & In-dist test & \cellcolor[RGB]{255,225,235} 60.0 ($\uparrow 0.7$) & 53.6  \\
 &  & Out-dist test & \cellcolor[RGB]{255,225,235} 87.2 ($\downarrow 1.2$) & \cellcolor[RGB]{255,225,235} 26.0  \\
\addlinespace[1pt]
Round robin 50\%+50\% & $\mathrm{I}$ & In-dist test & 53.2 & \cellcolor[gray]{0.88} 50.3  \\
 &  & Out-dist test & 86.0 & \cellcolor[gray]{0.88} 23.3  \\
\addlinespace[1pt]
 & $\mathrm{II}$ & In-dist test & 53.2 ($\downarrow 0.0$) & 56.4  \\
 &  & Out-dist test & 85.6 ($\downarrow 0.4$) & 25.3  \\
\addlinespace[1pt]
Full-attn continual & $\mathrm{I}$ & In-dist test & 55.3 & \cellcolor[gray]{0.88} 48.9  \\
 &  & Out-dist test & 86.0 & \cellcolor[gray]{0.88} 21.3  \\
\addlinespace[1pt]
 & $\mathrm{II}$ & In-dist test & 51.2 ($\downarrow 4.1$) & 58.8  \\
 &  & Out-dist test & 84.2 ($\downarrow 1.8$) & \cellcolor[RGB]{255,225,235} 26.0  \\
\addlinespace[1pt]
Full continual & $\mathrm{I}$ & In-dist test & 58.2 & \cellcolor[gray]{0.88} 49.8  \\
 &  & Out-dist test & 88.2 & \cellcolor[gray]{0.88} 24.0  \\
\addlinespace[1pt]
 & $\mathrm{II}$ & In-dist test & 57.2 ($\downarrow 1.0$) & \cellcolor[RGB]{255,225,235} 59.0  \\
 &  & Out-dist test & 86.8 ($\downarrow 1.4$) & 24.3  \\
\bottomrule
\end{tabular}
\end{table}

\begin{table}[H]
\caption{Continual RL accuracy (\%) of \lm{Qwen3-1.7B} (Math $\rightarrow$ Coding $\rightarrow$ IF). The model is first trained on Math task, then on Coding task, and finally on IF task. Parenthesized arrows are performance changes from the previous stage. The performance before training on the corresponding task is highlighted in gray.}
\label{tab:continual-qwen317bmathmbppifeval}
\centering
\small
\setlength{\tabcolsep}{4pt}
\begin{tabular}{llccc}
\toprule
Train stage & Eval split & Math metric & Coding metric & IF metric \\
\midrule
$\mathrm{I}$ & In-dist test & 34.5 & \cellcolor[gray]{0.88} 55.6 &  \\
 & Out-dist test & 76.2 & \cellcolor[gray]{0.88} 34.9 &  \\
\addlinespace[1pt]
$\mathrm{II}$ & In-dist test & 32.4 ($\downarrow 2.1$) & 59.3 & \cellcolor[gray]{0.88} 41.7 \\
 & Out-dist test & 75.0 ($\downarrow 1.2$) & 35.2 & \cellcolor[gray]{0.88} 14.0 \\
\addlinespace[1pt]
$\mathrm{III}$ & In-dist test & 33.6 ($\uparrow 1.2$) & 58.7 ($\downarrow 0.6$) & 47.3 \\
 & Out-dist test & 74.8 ($\downarrow 0.2$) & 34.0 ($\downarrow 1.2$) & 17.7 \\
\bottomrule
\end{tabular}
\end{table}

\begin{table}[htbp]
\caption{Continual RL accuracy (\%) of \lm{Qwen3-4B} (Math $\rightarrow$ IF $\rightarrow$ Coding). The model is first trained on Math task, then on IF task, and finally on Coding task. Parenthesized arrows are performance changes from the previous stage. The performance before training on the corresponding task is highlighted in gray.}
\label{tab:continual-qwen34bmathifevalmbpp}
\centering
\small
\setlength{\tabcolsep}{4pt}
\begin{tabular}{llccc}
\toprule
Train stage & Eval split & Math metric & IF metric & Coding metric \\
\midrule
$\mathrm{I}$ & In-dist test & 55.7 & \cellcolor[gray]{0.88} 50.2 &  \\
 & Out-dist test & 86.8 & \cellcolor[gray]{0.88} 21.3 &  \\
\addlinespace[1pt]
$\mathrm{II}$ & In-dist test & 53.6 ($\downarrow 2.1$) & 57.7 &\cellcolor[gray]{0.88} 66.4 \\
 & Out-dist test & 83.0 ($\downarrow 3.8$) & 24.0 & \cellcolor[gray]{0.88} 43.5 \\
\addlinespace[1pt]
$\mathrm{III}$ & In-dist test & 53.9 ($\uparrow 0.3$) & 56.7 ($\downarrow 1.0$) & 67.2 \\
 & Out-dist test & 85.0 ($\uparrow 2.0$) & 23.3 ($\downarrow 0.7$) & 49.2  \\
\bottomrule
\end{tabular}
\end{table}

\begin{table}[H]
\centering
\small
\caption{Trainable parameters and FP32-equivalent gradient memory (4 bytes per parameter).}
\label{tab:efficiency}
\begin{tabular}{lrrr}
\toprule
Model & 50\% Attn & Full Attn & Full \\
\midrule
\lm{Qwen3-1.7B} & 0.176B / 0.656 GiB & 0.352B / 1.312 GiB & 1.721B / 6.410 GiB \\
\lm{Qwen3-4B} & 0.472B / 1.758 GiB & 0.944B / 3.516 GiB & 4.022B / 14.985 GiB \\
\lm{Mistral-7B-Instruct-v0.3} & 0.671B / 2.500 GiB & 1.342B / 5.000 GiB & 7.248B / 27.001 GiB \\
\bottomrule
\end{tabular}
\end{table}
\begin{table}[H]
\centering
\caption{
Spearman distances ($d:=1-\text{Spearman correlation}$) between head rankings, where lower values indicate more similar rankings, identical rankings have distance $0$, and two independent uniformly random rankings have expected distance $1$. For Math and IF, within-task distances are averaged over the three pairwise comparisons among disjoint 1k-subsets of \texttt{DAPO-Math-17k} and \texttt{RLVR-IFEval}. For Coding, the within-task distance is computed between two disjoint 487-subsets of \texttt{MBPP}. Cross-task distances are computed using the training sets.}
\label{tab:head-order-rank-distance}
\small
\begin{tabular}{lcccccc}
\toprule
& \multicolumn{3}{c}{Within-task}
& \multicolumn{3}{c}{Cross-task} \\
\cmidrule(lr){2-4}
\cmidrule(lr){5-7}
Model
& Math
& IF
& Coding
& Math \& IF
& Math \& Coding
& IF \& Coding \\
\midrule
\lm{Qwen3-1.7B}
& 0.025 & 0.026 & 0.051
& 0.044 & 0.084 & 0.081 \\
\lm{Qwen3-4B}
& 0.067 & 0.040 & 0.047
& 0.076 & 0.121 & 0.104 \\
\bottomrule
\end{tabular}
\end{table}
\begin{table}[H]
\caption{Head-reset experiments after continual RL. After completing stage $\mathrm{II}$, we reset the heads updated during stage $\mathrm{I}$ to their base-model values. We report the resulting test accuracy (\%) changes. Stage-$\mathrm{II}$ performance that does not decrease after the reset is highlighted in orange. The results suggest that the adaptations learned on the disjoint head sets are functionally localized.}
\label{tab:stage1-head-reset}
\centering
\small
\setlength{\tabcolsep}{5pt}
\begin{tabular}{lllcccc}
\toprule
& & & \multicolumn{2}{c}{Stage $\mathrm I$ task} & \multicolumn{2}{c}{Stage $\mathrm{II}$ task} \\
\cmidrule(lr){4-5}\cmidrule(lr){6-7}
Model & Task sequence & Head allocation & In-dist & Out-dist & In-dist & Out-dist \\
\midrule
\lm{Qwen3-1.7B} & Math $\rightarrow$ Coding & Partition & $-10.7$ & $-6.0$ & \cellcolor{orange!20} $+1.3$ & $-4.0$  \\
\midrule
\lm{Qwen3-1.7B} & IF $\rightarrow$ Math & Partition & $-16.4$ & $-4.3$ & \cellcolor{orange!20}$+1.1$ & \cellcolor{orange!20}$+1.8$ \\
& & Round robin & $-11.6$ & $-1.67$ & $-3.3$ & $-0.4$\\
\midrule
\lm{Qwen3-4B} & Coding $\rightarrow$ IF & Partition & $-4.0$ & $-6.4$ & \cellcolor{orange!20}$+0.4$ & \cellcolor{orange!20}$+0.0$ \\
& & Round robin & $-0.8$  & $+6.0$ & \cellcolor{orange!20}$+0.6$ & \cellcolor{orange!20}$+2.3$ \\
\midrule
\lm{Qwen3-4B} & Math $\rightarrow$ IF & Partition & $-17.4$ & $-7.2$ & \cellcolor{orange!20}$+0.8$ & $-2.3$ \\
& & Round robin & $-9.9$ & $-5.8$ & \cellcolor{orange!20}$+0.8$ & \cellcolor{orange!20}$+0.3$ \\
\bottomrule
\end{tabular}
\end{table}

\subsection{Layer distributions of top-ranked heads}\label{sec:layer-dist}

\begin{figure}[H]
\centering
\includegraphics[width=0.8\textwidth]{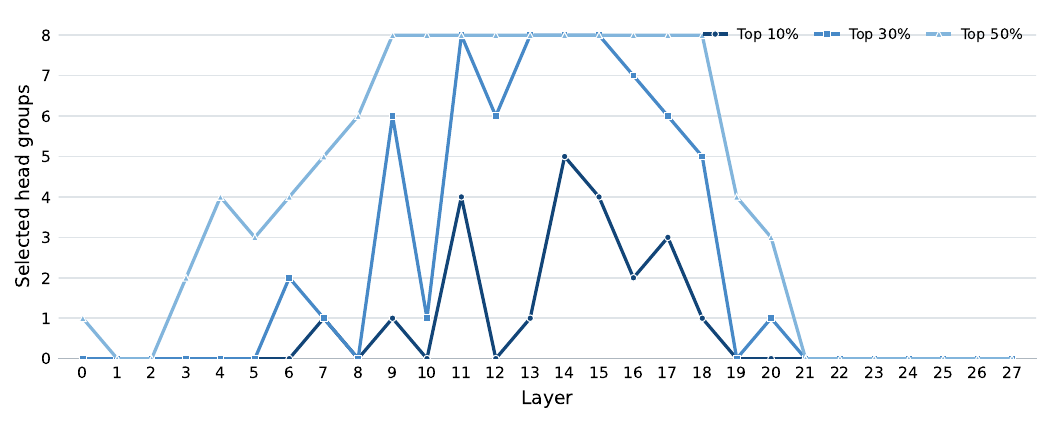}
\caption{Layer-wise distribution of selected attention-head groups for \lm{Qwen3-1.7B}, computed on the training set of Math.}
\label{fig:head-layer-qwen317b-math}
\end{figure}

\begin{figure}[H]
\centering
\includegraphics[width=0.8\textwidth]{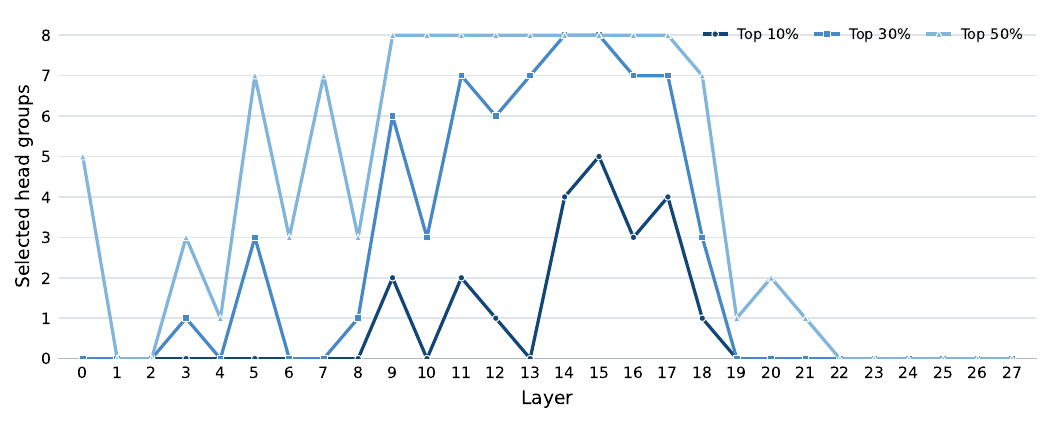}
\caption{Layer-wise distribution of selected attention-head groups for \lm{Qwen3-1.7B}, computed on the training set of Basic Coding.}
\label{fig:head-layer-qwen317b-mbpp}
\end{figure}

\begin{figure}[H]
\centering
\includegraphics[width=0.8\textwidth]{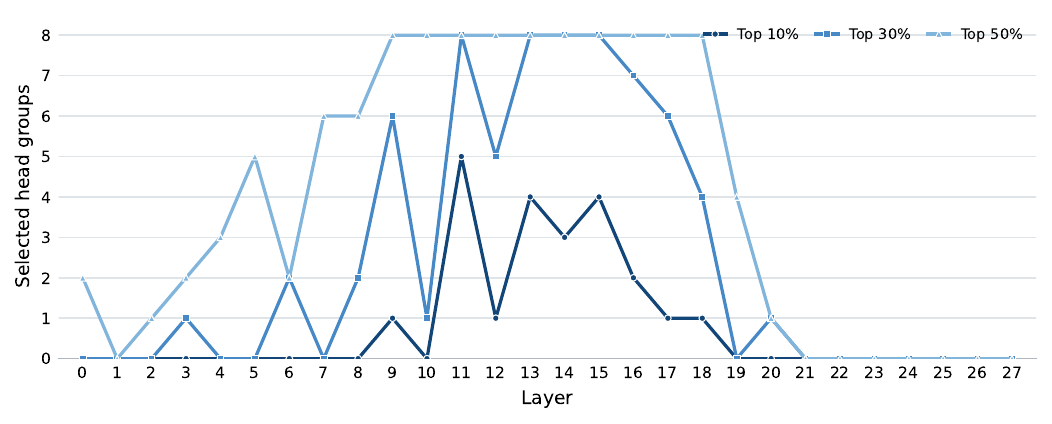}
\caption{Layer-wise distribution of selected attention-head groups for \lm{Qwen3-1.7B}, computed on the training set of Instruction-following.}
\label{fig:head-layer-qwen317b-ifeval}
\end{figure}

\begin{figure}[H]
\centering
\includegraphics[width=0.8\textwidth]{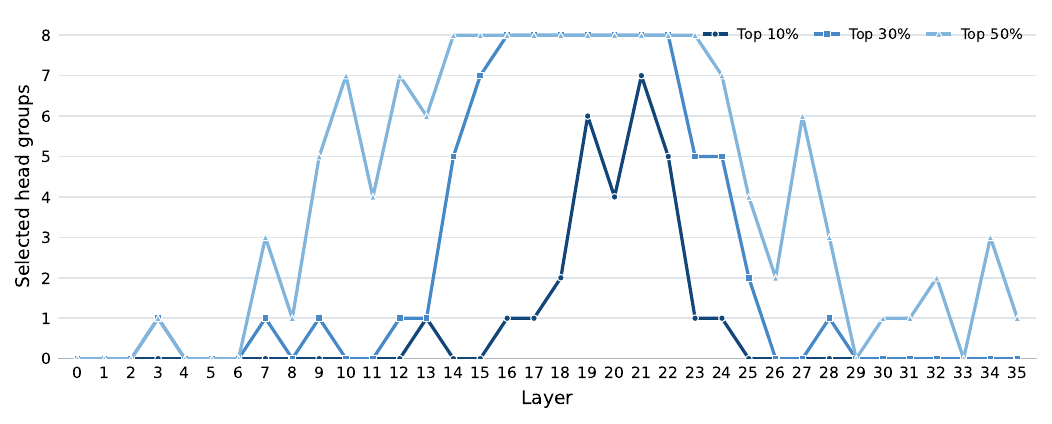}
\caption{Layer-wise distribution of selected attention-head groups for \lm{Qwen3-4B}, computed on the training set of Math.}
\label{fig:head-layer-qwen34b-math}
\end{figure}

\begin{figure}[H]
\centering
\includegraphics[width=0.8\textwidth]{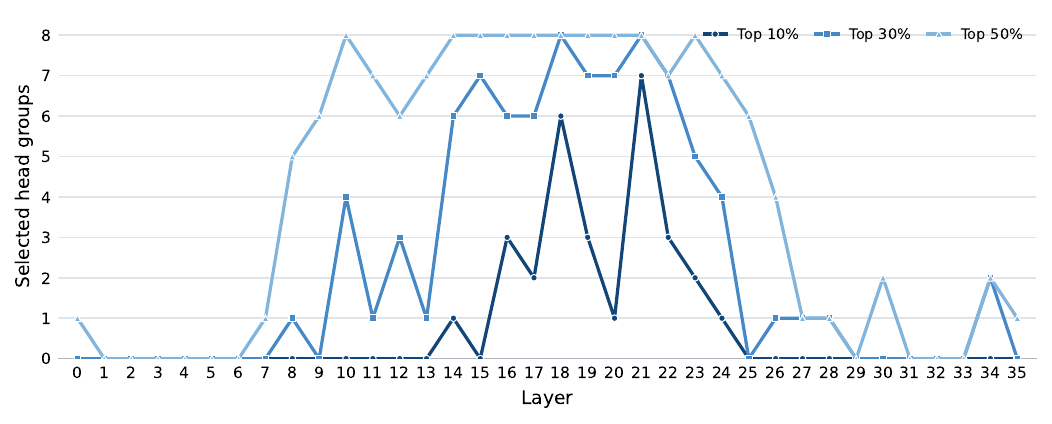}
\caption{Layer-wise distribution of selected attention-head groups for \lm{Qwen3-4B}, computed on the training set of Basic Coding.}
\label{fig:head-layer-qwen34b-mbpp}
\end{figure}

\begin{figure}[H]
\centering
\includegraphics[width=0.8\textwidth]{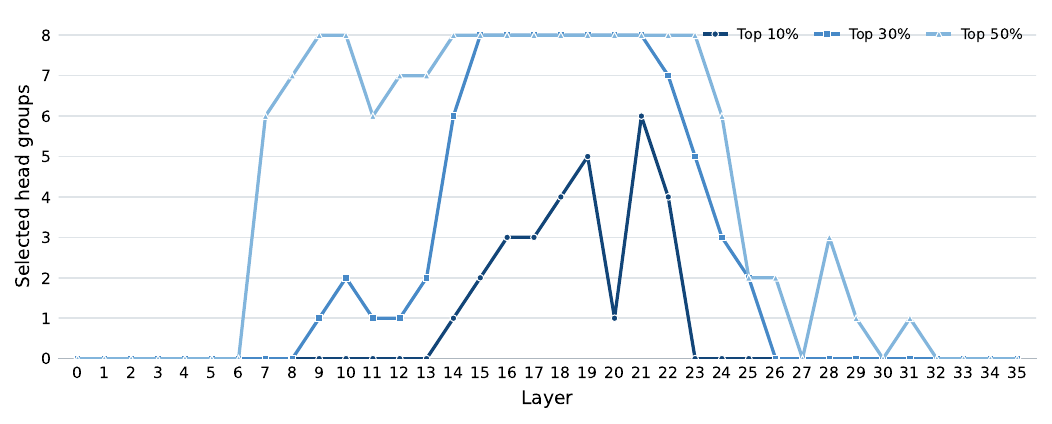}
\caption{Layer-wise distribution of selected attention-head groups for \lm{Qwen3-4B}, computed on the training set of Instruction-following.}
\label{fig:head-layer-qwen34b-ifeval}
\end{figure}

\begin{figure}[H]
\centering
\includegraphics[width=0.96\textwidth]{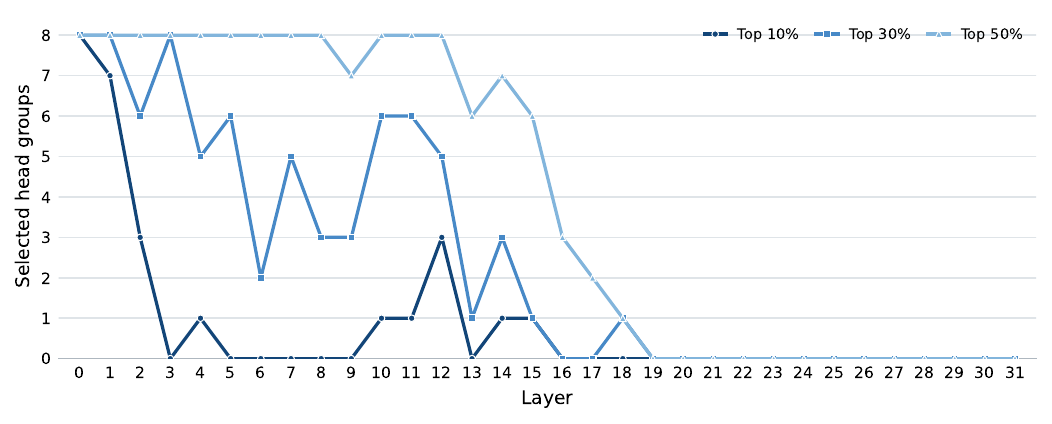}
\caption{Layer-wise distribution of selected attention-head groups for \lm{Mistral-7B-Instruction-v0.3}, computed on the training set of Instruction-following.}
\label{fig:head-layer-mistral7b}
\end{figure}

\newpage
\subsection{Hyperparameter tuning results}\label{sec:lr_tuning}
\begin{figure*}[h]
\centering
\includegraphics[width=0.78\textwidth]{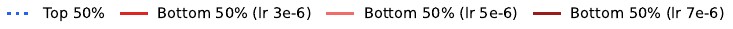}
\begin{minipage}[t]{0.48\textwidth}\centering\includegraphics[width=\linewidth]{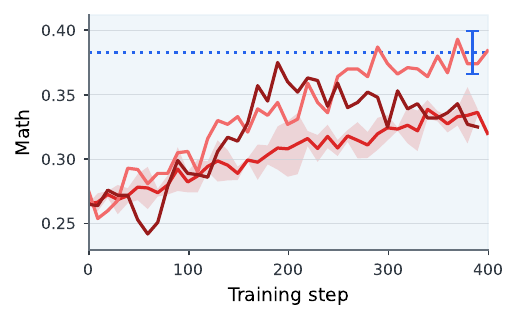}\par\small (a) \lm{Qwen3-1.7B}, Math\end{minipage}\hfill\begin{minipage}[t]{0.48\textwidth}\centering\includegraphics[width=\linewidth]{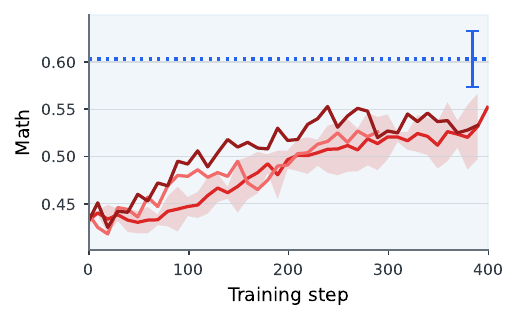}\par\small (b) \lm{Qwen3-4B}, Math\end{minipage}
\begin{minipage}[t]{0.48\textwidth}\centering\includegraphics[width=\linewidth]{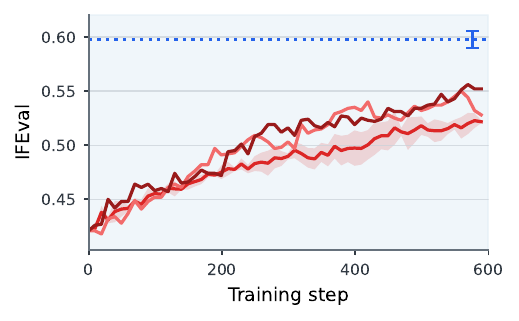}\par\small (c) \lm{Qwen3-1.7B}, IFEval\end{minipage}\hfill\begin{minipage}[t]{0.48\textwidth}\centering\includegraphics[width=\linewidth]{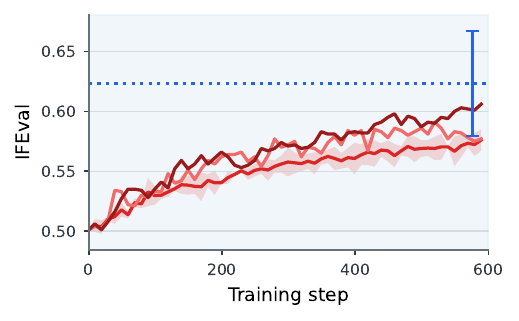}\par\small (d) \lm{Qwen3-4B}, IFEval\end{minipage}
\caption{Sparse RL training accuracy curves of \lm{Qwen3} series for Math and IF tasks. The learning rate for top-50\% attention-head tuning is fixed at $3\times10^{-6}$, while bottom-50\% attention-head tuning is evaluated with learning rates of $3\times10^{-6}$, $5\times10^{-6}$, and $7\times10^{-6}$. Top-50\% attention-head tuning is shown as a horizontal line with an error bar indicating mean$\pm$std over three random rollout-seeds.}
\label{fig:sparse-lr-sweep}
\end{figure*}

\newpage
\section{Implementation Details}\label{sec:implementation}
\paragraph{Codebase.} Our codebase is mainly based on miles~\citep{miles2026}~(\url{https://github.com/radixark/miles}). It is worth noting that our current sparse head tuning implementation is not yet fully optimized, though realizing its full efficiency benefits should be straightforward with modern training infrastructure. Our code is released at \url{https://github.com/srzer/head-sparse-rl}. The code was implemented with the assistance of Codex and Claude Code.

\paragraph{Datasets and benchmarks.} For math, we use a 1k subset of \texttt{DAPO-Math-17k}~(\url{https://huggingface.co/datasets/BytedTsinghua-SIA/DAPO-Math-17k}) for training, a disjoint 1k subset for in-distribution test, and \texttt{MATH-500}~(\url{https://huggingface.co/datasets/HuggingFaceH4/MATH-500}) for out-of-distribution test. For instruction-following, we similarly use a 1k subset of \texttt{RLVR-IFEval}~(\url{https://huggingface.co/datasets/allenai/RLVR-IFeval}) for training, a disjoint 1k subset for in-distribution test, and \texttt{IFBench}~(\url{https://huggingface.co/datasets/allenai/IFBench_test}) for out-of-distribution test. For basic coding, we use \texttt{MBPP}~(\url{https://huggingface.co/datasets/google-research-datasets/mbpp}) for training, \texttt{MBPP+}~(\url{https://huggingface.co/datasets/evalplus/mbppplus}) for in-distribution test, and \texttt{APPS-intro-test}~(\url{https://huggingface.co/datasets/codeparrot/apps}) for out-of-distribution test. Notably, \texttt{MBPP+} uses the same problem set as \texttt{MBPP} but with strengthened test cases, thus measuring performance under more rigorous evaluation.

\paragraph{Models.} Limited by computation resources, our base models are \lm{Qwen3-1.7B}~\citep{qwen3technicalreport}~(\url{https://huggingface.co/Qwen/Qwen3-1.7B}), \lm{Qwen3-4B}~\citep{qwen3technicalreport}~(\url{https://huggingface.co/Qwen/Qwen3-4B}), and \lm{Mistral-7B-Instruct-v0.3}~(\url{https://huggingface.co/mistralai/Mistral-7B-Instruct-v0.3}).

\paragraph{Training hyper-parameters.}
Each rollout contains 16 prompts and 4 responses per prompt, with a global batch size of 64. We use temperature 1.0 and a maximum generation length of 3,072 tokens (2,048 for coding). Adam is used with learning rate $3\times10^{-6}$ ($1\times10^{-6}$ for full training, $2\times10^{-6}$ for \lm{Mistral-7B}), weight decay 0.1, and $(\beta_1,\beta_2)=(0.9,0.98)$. The policy-ratio clipping range is 0.2--0.28, with zero KL and entropy coefficients. Training runs for 400--600 rollout steps.

\paragraph{Evaluation hyper-parameters.}
We use greedy-decoding for evaluation. For each evaluation example, we generate a single response with a maximum length of 4,096 tokens, temperature 1.0, and top-$k$ equal to 1. We report pass@1 accuracy using the task-specific verifier for each task.

\paragraph{Computation resources.}
Experiments are run on NVIDIA A100 and RTX A6000 GPUs, using four GPUs per job (eight for \lm{Mistral-7B}), with a typical runtime of approximately one day.

\end{document}